\journal{Elsevier}
\newtheorem{lemma}{Lemma}%
\newtheorem{definition}{Definition}%
\renewcommand*{\ALG@name}{Workflow}
\definecolor{revisioncolor}{rgb}{0.1,0.1,1}
\def\ps@pprintTitle{%
 \let\@oddhead\@empty
 \let\@evenhead\@empty
 \let\@oddfoot\@empty
 \let\@evenfoot\@empty}
\begin{document}

\begin{frontmatter}

\title{A Task and Motion Planning Framework Using Iteratively Deepened AND/OR Graph Networks}

\author[label1]{Hossein Karami}
\ead{hossein.karami@edu.unige.it}
\address[label1]{Konecranes
}

\author[label2]{Antony Thomas}
\ead{antony.thomas@iiit.ac.in}
\address[label2]{Robotics Research Center, IIIT Hyderabad, Hyderabad 500032, India.}

\author[label3]{Fulvio Mastrogiovanni}
\ead{fulvio.mastrogiovanni@unige.it}
\address[label3]{Department of Informatics, Bioengineering, Robotics, and Systems Engineering, University of Genoa, Via All'Opera Pia 13, 16145 Genoa, Italy. }

\begin{abstract}
In this paper, we present an approach for integrated task and motion planning based on an AND/OR graph network, which is used to represent task-level states and actions, and we leverage it to implement different classes of task and motion planning problems (TAMP).
Several problems that fall under task and motion planning do not have a predetermined number of sub-tasks to achieve a goal.
For example, while retrieving a target object from a cluttered workspace, in principle the number of object re-arrangements required to finally grasp it cannot be known ahead of time. 
To address this challenge, and in contrast to traditional planners, also those based on AND/OR graphs, we grow the AND/OR graph at run-time by progressively adding sub-graphs until grasping the target object becomes feasible, which yields a network of AND/OR graphs. 
The approach is extended to enable multi-robot task and motion planning, and
(i) it allows us to perform task allocation while coordinating the activity of a \textit{given} number of robots, and 
(ii) can handle multi-robot tasks involving an \textit{a priori} unknown number of sub-tasks. 

The approach is evaluated and validated both in simulation and with a real dual-arm robot manipulator, that is, Baxter from Rethink Robotics.
In particular, for the single-robot task and motion planning, we validated our approach in three different TAMP domains.
Furthermore, we also use three different robots for simulation, namely, Baxter, Franka Emika Panda manipulators, and a PR2 robot. 
Experiments show that our approach can be readily scaled to scenarios with many objects and robots, and is capable of handling different classes of TAMP problems.
\end{abstract}

\begin{keyword}
task-motion planning, AND/OR graph networks, multi-robot task-motion planning, TAMP
\end{keyword}

\end{frontmatter}

\section{Introduction}
\label{sec:intro}

In common situations, humans trivially perform complex manipulation tasks, such as picking up a tool from a cluttered toolbox, or grabbing a book from a shelf, if necessary by re-arranging occluding objects. 
For humans, these tasks seem to be routine, and they neither require much \textit{conscious} planning nor cognitive focus during action execution. 
Yet, for robots, this is definitely not the case. 
Such complex manipulation tasks as \textit{picking from clutter} or \textit{rearranging a set of objects} require advanced forms of reasoning to decide which objects to pick up or re-arrange, and in which sequence, so as to synthesize motions towards the target objects to account for the geometry-level feasibility of the task-level actions. 
This interaction between task-level symbolic reasoning and the geometry-level motion planning is the subject of integrated \textit{Task and Motion Planning} (TAMP)~\cite{lagriffoul2018RAL}.
In a general sense, TAMP refers to the problem of \textit{jointly} planning a sequence of discrete actions and the corresponding set of feasible continuous motions to attain a formally specified goal state, which could be expressed as a hybrid geometric-symbolic description of facts concerning the robot workspace, including objects therein~\cite{garrett2021ARC,guo2023ACM}.

In the literature, the \textit{de facto} standard syntax for specifying task-level actions is the Planning Domain Definition Language (PDDL)~\cite{mcdermott1998AIPS}, and most approaches resort to it. 
Task-level domains may also be specified using temporal logics~\cite{he2015ICRA}, or formal languages~\cite{dantam2013TRO}. 
Given a task-level domain, task planning finds a sequence of discrete actions from the current state to a desired goal state, which is usually expressed in symbolic form as a conjunction of predicates~\cite{ghallab2016book}. 
Likewise, motion planning approaches find sequences of collision-free configurations to reach a desired goal configuration, which is represented in geometrical terms or in the configuration space~\cite{latombe1991robot}. 
Therefore, TAMP approaches aim at establishing an appropriate mapping between task-level and motion-level domains. 
Examples of such a mapping include, given a task-level plan, the corresponding motion-level actions, or given a task-level state, the corresponding sequence of feasible geometric configurations.    

\begin{figure*}[t!]
\centering
\subfloat[]{\includegraphics[trim=1cm 3cm 5cm 18cm, clip=true,width=0.35\textwidth]{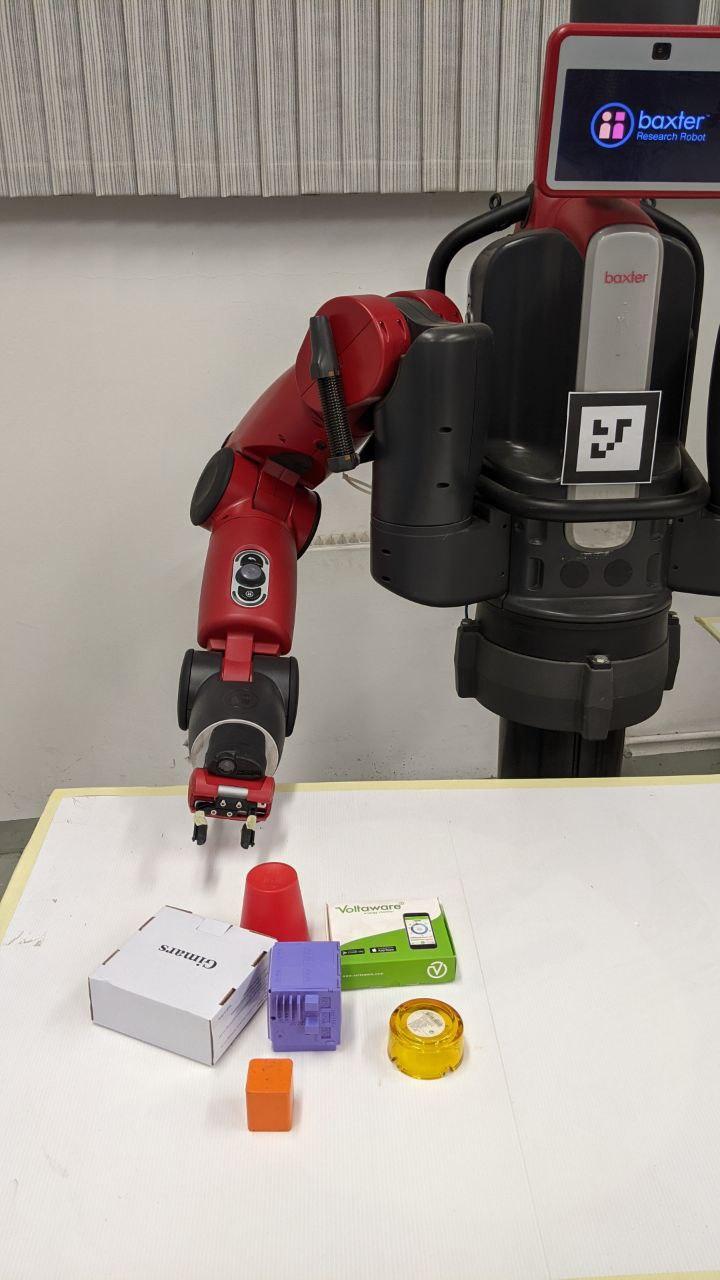}\label{fig:toy1}} \hfill
\subfloat[]{\includegraphics[trim=20cm 5cm 30cm 7cm, clip=true,width=0.595\textwidth]{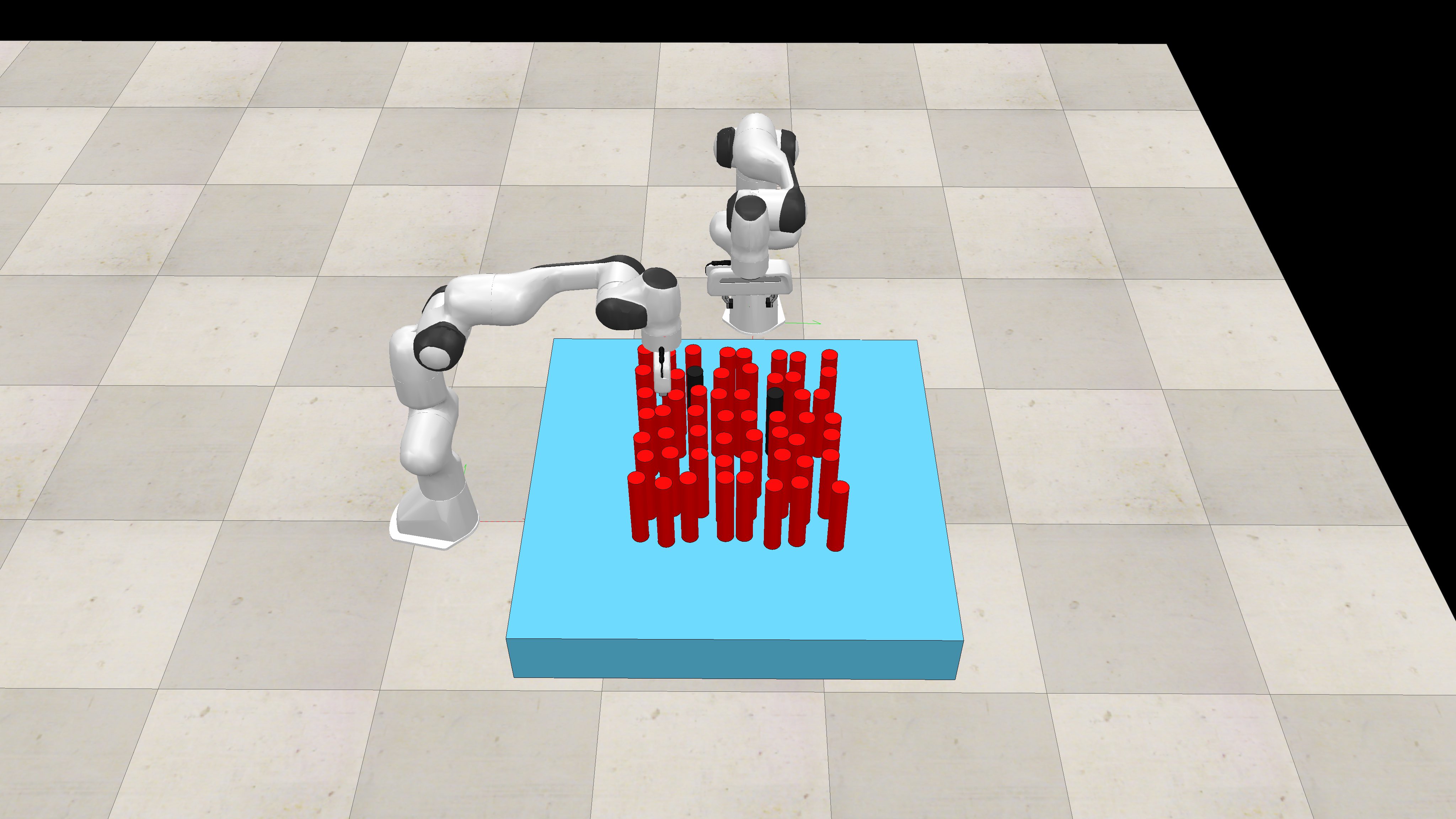}\label{fig:toy2}}
\caption{
(a) The robot needs to pick the purple cube. 
If only side grasps were allowed, then the robot would go through a sequence of pick-and-place actions to de-clutter the area. 
(b) Cluttered table-top with two target objects (in black) and other objects (in red). 
The multi-robot system consists of two manipulators.}
\label{fig:toy}
\end{figure*} 
Though many off-the-shelf PDDL-based planners are available, establishing the correspondence between task-level and motion-level planners is not an easy task, and it requires the development of a full-fledged robot planning and control architecture. 
For the sake of the argument, let us consider a simple \textit{cluttered table-top scenario}, as shown in Fig.~\ref{fig:toy1}, where a robot must pick up a purple cube from clutter. 
As seen in the Figure, under the assumption that only side grasps are allowed, the purple cube cannot be immediately reached, and other objects hindering the \textit{target grasp} need to be properly re-arranged. 
Such a scenario may be modeled using two PDDL actions, namely \textsf{pick} and \textsf{place}, which should be alternated as a sequence of \textit{pick-and-place} schemas to make grasping the purple cube possible.
Obviously enough, in a TAMP formulation, each task-level, symbolic action should also correspond to appropriate, feasible motions. 
Therefore, the typical PDDL domain may contain the following predicate definitions: 
\begin{equation*}
\textsf{(:predicates (clear ?x) (gripper\_empty) (in\_hand ?x))}.
\end{equation*}
The predicates \textsf{(clear\ ?x}), \textsf{(gripper\_empty)}, and \textsf{(in\_hand\ ?x)} respectively would check whether an object, generically identified with the variable \textsf{?x}, would be clear, that is, nothing hinders its grasping, whether the gripper would not hold any object, and which object the gripper would hold. 
In order to keep the scenario simple enough, we deliberately make the assumption that if the target grasp were hindered, the robot would be able to pick up another object, or otherwise different objects in sequence, to make the target \textsf{pick} action feasible. 
We note that PDDL-based planners initiate the search through a process called \textit{grounding}, which is aimed at replacing the variables in the predicates (for example the variable \textsf{?x} above) with possible \textit{anchors} to real-world objects, so as to instantiate predicates and action schemas. 
If our scenario would comprise $6$ possible objects to grasp and/or re-arrange, like in Fig.~\ref{fig:toy1}, that would give rise to $13$ possible propositions, that is, $6$ possibilities for each of the predicates \textsf{(clear\ ?x)} and \textsf{(in\_hand\ ?x)}, as well as one for \textsf{(gripper\_empty)}. 
Therefore, this would yield $n = 2^{13}$ possible states since we consider Boolean truth values.
Finding the shortest path in the search space using a state-transition graph via a typical \textsf{pick} and \textsf{place} scheme would be characterized by a $O(n \log n)$ temporal complexity~\cite{bryce2007AIM}. 
Heuristic search avoids visiting large regions of the transition graph via informed search to reach the symbolic goal state faster. 
It must be noted that an \textit{effective} reaching of the goal state also depends on the configuration space, and that it should be verified that each \textsf{pick} and \textsf{place} action would correspond to feasible motions. 
However, what is even more relevant for this discussion is that, in a general sense, the number of objects that must be re-arranged to pick up the target object may not be known (or easily computed) beforehand at the task-level planning time, which would require observation-based re-planning within the PDDL framework, and in cascade an additional mapping between the PDDL-based action space and the robot observation space~\cite{bertolucci2021TPLP}. 
While PDDL planning is EXPSPACE-complete and can be restricted to less compact propositional encodings to achieve PSPACE-completeness~\cite{helmert2006PHD}, it is noteworthy that the time taken for each re-planning would further exacerbate the temporal complexity.
Focusing instead on TAMP, we observe that although \textit{single-robot} TAMP has been an area of active research, current approaches do not naturally account for the benefits afforded by the presence of \textit{multiple} robots~\cite{motes2020RAL}.
These may include the fact that tasks may be decomposed in a variety of ways, task allocation may involve different robots, or that re-arrangements may benefit from an explicit coordination among robots. 
A straightforward extension of single-robot TAMP approaches to the multi-robot case would have to treat the multi-robot \textit{team} as a collection of (possibly many) single robots, which would further worsen computational and temporal complexity as the number of robots would increase. 

In this paper, we present a probabilistically complete approach for TAMP in single- and multi-robot scenarios, which we refer to as \textit{Task-Motion Planning using Iterative Deepened AND/OR Graph Networks}, and we refer to it as \textsc{TMP-IDAN} for short.
We tackle the two challenges described above, that is 
(i) the computational complexity of typical PDDL-based planners is $O(n \log n)$, and
(ii) the number of sub-tasks may not be known in advance, by defining the task-level domain using an AND/OR graph, and by encoding the task-level abstractions of the TAMP problem in an efficient and compact way within the AND/OR graph. 
With respect to challenge (i) above, in Section~\ref{sec:AOgraph} we show that by using AND/OR graph networks the complexity of task-level planning is \textit{almost linear} with respect to the number of graphs in the network.
In view of challenge (ii), we introduce a variant of AND/OR graph networks that is able to iteratively deepen (expand) at run-time till reaching a state whereby the goal state can be achieved. We then extend our approach to multi-robot TAMP. A preliminary version of multi-robot TAMP was published in the conference paper~\cite{karami2021AIIA}. In the current work, we provide a rigorous formalization of the AND/OR graph networks, offering a more comprehensive explanation of the multi-robot TAMP methodology. Additionally, we present a much more detailed analysis of the experimental results. Specifically, our multi-robot TAMP framework encompasses the following capabilities:  
(i) task allocation among the available robots, and  
(ii) planning a sequence of actions for multiple decomposable tasks using AND/OR graphs, ensuring optimality with respect to a multi-robot wide utility function. 
Finally, the approach is evaluated and validated through various experimental domains, which address challenging aspects of TAMP. 
These include:
\begin{itemize}
    \item
    \textit{Cluttered workspace}.
    This experiment highlights different aspects of TAMP, such as:
    \textit{unfeasible actions}, that is, discrete actions are impossible due to the unfeasibility at the motion planning level, for example because of obstructing objects;
    \textit{large task space}, that is, a substantial search may be required to find a solution;
    \textit{task/motion trade-off}: the problem can be solved in fewer steps if the objects to be rearranged are carefully chosen.
    \item
    \textit{Tower of Hanoi}.
    This problem illustrates challenges such as \textit{unfeasible actions} and the \textit{large task space} issue.
    \item
    \textit{Kitchen}.
    This problem presents challenges such as:
    \textit{unfeasible actions}, as above;
    \textit{non-monotonicity}, that is, some objects may need to be moved more than once;
    \textit{non-geometric actions}, that is, involving actions that change the symbolic states without altering the configuration space, for example, objects may be ``cleaned'' by placing them in the dishwasher or ``cooked'' by placing them in the microwave: the ``clean'' and ``cook'' actions are non-geometric in nature.
\end{itemize}

The rest of the paper is organized as follows. 
In Section~\ref{sec:related} we discuss related state-of-the-art approaches. 
TAMP formalisms are discussed in Section~\ref{sec:background}.
We introduce AND/OR graph networks in Section~\ref{sec:AOgraph}, and conclude the Section by completing the discussion of the toy example in Fig.~\ref{fig:toy}. 
In Section~\ref{sec:approach} we formalize \textsc{TMP-IDAN} for a single as well as a multi-robot scenario. 
We evaluate the capabilities of \textsc{TMP-IDAN} in Section~\ref{sec:results}. 
In Section~\ref{sec:discussion}, we discuss the limitations of \textsc{TMP-IDAN} and Section~\ref{sec:conclusion} concludes the paper.

\section{Related Work}
\label{sec:related}

In the recent past, TAMP has received considerable interest among the Robotics research community~\cite{kaelbling2013IJRR, srivastava2014ICRA, dantam2016RSS, lagriffoul2016IJRR, garrett2018IJRR, thomas2021RAS, garrett2019arxiv}. 
The primary challenge of planning in a hybrid, symbolic-geometrical space is to obtain an efficient mapping between the discrete task level and the continuous motion level. 
A combined search in a hybrid symbolic and geometric space using states composed of both the symbolic and geometric variables is performed in~\cite{cambon2009IJRR}. 
A hierarchical approach for TAMP is introduced in~\cite{kaelbling2012aTR}, wherein planning is done at different levels of abstraction, thereby reducing longer plans to a set of feasible, shorter sub-plans. 
However, planning occurs backwards from the goal (that is, regression), and assumes that motion-level actions are reversible while backtracking. 
A similar approach is also employed in~\cite{pandey2012BIOROB, de2013RSSws} to compute discrete actions with unbounded continuous variables.

Semantic attachments are used in~\cite{dornhege2009SSRR, dornhege2009ICAPS, dantam2016RSS, thomas2021RAS} to associate algorithms to functions and predicate symbols via procedures external to the planning logic. 
Though semantic attachments allow for a mapping between the symbolic and geometric spaces, all the relevant knowledge about the environment is assumed to be known in advance. 
Specifically, the robot configuration and the possible grasp poses need to be specified in advance, which as a matter of fact renders the continuous motion space to be discrete.
The FFRob approach, described in~\cite{garrett2018IJRR}, performs task planning via a search over a finite set of pre-sampled poses, grasps and robot configurations. 
This \textit{a priori} discretization is relaxed in~\cite{kaelbling2011ICRA, srivastava2014ICRA, toussaint2015IJCAI}. 
For example, the approach in~\cite{srivastava2014ICRA} implicitly incorporates motion-level variables while performing symbolic-geometric mapping using a planner-independent interface layer. 
Most approaches (for example~\cite{erdem2011ICRA, dantam2016RSS}) first compute a task-level plan and refine it until a feasible motion plan is found or until timeout. 
To this end, the two approaches in~\cite{erdem2011ICRA, dantam2016RSS} adopt constraint-based task planning to leverage ongoing advances in solvers for Satisfiability Modulo Theories (SMT)~\cite{de2011CACM}. 
The approach in~\cite{thomas2021RAS}, however, checks for the motion feasibility as each task-level action is expanded by the task planner, but in doing so it assumes a discretized motion space and thereby a finite action set. 
Recent work of Garrett \textit{et al.}~\cite{garrett2020ICAPS} addresses this limitation by introducing \textit{streams} within PDDL, which enable procedures for sampling values of continuous variables and thereby encoding a theoretically infinite set of actions.
Functional Object-Oriented Networks (FOON) were introduced in~\cite{paulius2023RAL} to represent knowledge for robots in a way that encourages abstraction for combined task and motion planning.
However, the open-loop nature of the execution cannot handle such aspects as possible collisions with objects.

The above mentioned methods do not consider the implications of having multiple robots in task allocation and collision avoidance, and would have to treat the multi-robot team as a collection of separate robots, which would become intractable as the number of robots increased. 
Current TAMP approaches for multi-arm robot systems focus on coordinated planning strategies, and consider simple pick-and-place or assembly tasks. 
As such, these methods do not scale to scenarios in which complex manipulations are required~\cite{rodriguez2016IROS, umay2019ROSE}.
TAMP for multi-arm robot systems in the context of welding is considered in~\cite{basile2012RCIM}, whereas~\cite{umay2019ROSE} discusses an approach for multi-arm TAMP manipulation. 
However, the considered manipulation tasks reduce to a simple pick-and-place operation involving bringing an object from an initial to a goal position, whereas two tables and a cylindrical object form the obstacles. 
TAMP for multi-robot teams has not been addressed thoroughly, and therefore the literature is not sufficiently developed.
Henkel~\textit{et al.}~\cite{henkel2019IROS} consider multi-robot transportation tasks using a Task Conflict-Based Search (TCBS) algorithm. 
Such an approach solves a combined task allocation and path planning problem, but assigns a single sub-task at a time and hence may not scale well to an increased number of robots. 
Interaction Templates (IT) for robot interactions during transportation tasks are presented in~\cite{motes2019RAL}. 
The interactions enable handing over payloads from one robot to another, but the method does not take into account the availability of robots and assumes that there is always a robot available for such an handover. 
Thus, while considering many tasks at a time this framework does not fare well since a robot may not be immediately available for an handover. 
A distributed multi-robot TAMP method for mobile robot navigation is presented in~\cite{thomas2020STAIRS}. 
However, the authors define task-level actions for a pair of robots and therefore optimal solutions are available for an even number of tasks, and only sub-optimal solutions are returned for an odd number of tasks.
Motes~\textit{et al.}~\cite{motes2020RAL} present TMP-CBS, a multi-robot TAMP approach with sub-task dependencies. 
They employ a CBS method~\cite{sharon2015AI} in the context of transportation tasks. 
Constructing a conflict tree for CBS requires the knowledge of different constraints which depend on the sub-task conflicts, for example, two robots being present at a given location at the same time. 
However, in the table-top scenario considered in this paper, the number of sub-tasks is not known beforehand. 
The capabilities of the discussed multi-robot TAMP methods are summarized in Table~\ref{tab:comp}. 

\begin{table*}[t]
\centering
\scalebox{0.7}{
\begin{tabular}{cccccc} 
\hline
\textit{Parameter}              & TCBS~\cite{henkel2019IROS}    & IT~\cite{motes2019RAL}    & \cite{thomas2020STAIRS}   & TMP-CBS~\cite{motes2020RAL}   & TMP-IDAN\\
\hline
\hline
Task allocation                 & \checkmark                    & \checkmark                & \checkmark                & \checkmark                    & \checkmark\\ 
Task decomposition              &                               & \checkmark                &                           & \checkmark                    & \checkmark\\
Motion planning                 &                               & \checkmark                & \checkmark                & \checkmark                    & \checkmark\\
Unknown number of sub-tasks     &                               &                           &                           &                               & \checkmark\\
\hline
\end{tabular}}
\caption{Comparison of different multi-robot TAMP methods.}
\label{tab:comp}
\end{table*}

\section{Formalization of the Task-Motion Planning Problem}
\label{sec:background}

Task planning (or classical planning) is the process of finding a discrete sequence of actions from the current to a desired goal state~\cite{ghallab2016book}, which is expressed in symbolic form as a proper conjunction of grounded predicates.
For the sake of terminology, in this Section we skim through a number of definitions that are relevant for the discussion that follows. 
We begin with the notion of \textit{task domain}.
\begin{definition}
A task domain $\Omega$ is a state transition system and can be represented as a tuple $\Omega = \langle S, A, \gamma, s_0, S_g \rangle$ where:
\label{def:one}
\begin{itemize}
\item $S$ is a finite set of states;
\item $A$ is a finite set of actions;
\item $\gamma: S \times A \rightarrow S$ such that $s' = \gamma(s, a)$;
\item $s_0 \in S$ is the start state;
\item $S_g \subseteq S$ is the set of goal states.
\end{itemize}
\end{definition}
\noindent Given a task domain, it is possible to define the concept of \textit{task plan}.
\begin{definition}
A task plan for a task domain $\Omega$ is the sequence of actions $a_0, \ldots, a_m$ such that $s_{i+1} = \gamma(s_i, a_i)$, for $i = 0, \ldots, m$ and $s_{m+1}$ \textit{satisfies} $S_g$.
\end{definition}
In general terms, motion planning finds a sequence of collision-free robot configurations from a given start configuration to a desired goal configuration~\cite{latombe1991robot}, which is expressed in terms of (a subset of) desired robot poses.
We define first what we mean by \textit{motion planning domain}. 
\begin{definition}
A motion planning domain is a tuple $M = \langle C, f, q_0, C_g \rangle$ where:
\begin{itemize}
\item $C$ is the configuration space;
\item $f: q \rightarrow \{0, 1\}$, a Boolean function that determines whether a configuration $q$ is in collision ($f=0$) or it is free ($f=1$);
\item $C_g \in C$ is the set of goal configurations.
\end{itemize}
\end{definition}
\noindent It is now possible define the notion of \textit{motion plan}.
\begin{definition}
A motion plan for a motion planning domain $M$ finds a collision-free trajectory in $C$ from $q_0$ to $q_n \in C_g$ such that $f=1$ for $q_0, \ldots, q_n$. 
Alternatively, a motion plan for a motion planning domain $M$ is a function in the form $\tau: [0, 1] \rightarrow C_{free}$ such that $\tau(0) = q_0$ and $\tau(1) \in C_g$, where $C_{free} \subset C$ is the configurations where the robot does not collide with other objects (or itself).
\end{definition}
TAMP combines discrete task planning and continuous motion planning to facilitate an efficient interaction between the two domains.
Below, we formally define what a \textit{task-motion domain} is.
\begin{definition}
A task-motion domain with task domain $\Omega$ and motion planning domain $M$ is a tuple $\Psi = \langle C, \Omega, \phi, \xi, q_0 \rangle$ where:
\begin{itemize}
\item $\phi: S \rightarrow 2^C$, maps states to the configuration space; 
\item $\xi: A \rightarrow 2^C$, maps actions to motion plans.
\end{itemize}
\end{definition}
\noindent Finally, we can define the concept of \textit{task-motion planning problem}.
\begin{definition}
The task-motion planning problem for a task-motion domain $\Psi$ consists in finding a sequence of discrete actions $a_0, \ldots, a_n$ such that $s_{i+1} = \gamma(s_i, a_i)$, $s_{n+1} \in S_g$, and a corresponding sequence of motion plans $\tau_0, \ldots, \tau_n$ such that for $i = 0, \ldots, n$, it holds that:
\begin{itemize}
\item $\tau_i(0) \in \phi(s_i) \ \textrm{and} \ \tau_i(1)  \in \phi(s_{i+1})$;
\item $\tau_{i+1}(0) = \tau_i(1)$;
\item $\tau_i \in \xi(a_i)$.
\end{itemize}
\end{definition}
\noindent This concludes the formalization of the problem we consider in this paper. 

\section{AND/OR Graph Networks}
\label{sec:AOgraph}

In this Section we provide a brief primer about AND/OR graphs, functional to the discussion of the pick-and-place toy example considered in the Introduction, and shown in Fig.~\ref{fig:toy}.
A more thorough exposition can be found in~\cite{karami2020ROMAN}.
An AND/OR graph is a graph with a specific superstructure representing a problem-solving process~\cite{chang1971AI}.
We start with the definition of a \textit{vanilla} AND/OR graph.
\begin{definition}
An AND/OR graph $G$ is a directed graph represented by the tuple $G = \langle N, H \rangle$, where:
\begin{itemize}
\item $N = \{n_1, \ldots, n_{\lvert N \rvert}\}$ is a set of nodes;
\item $H = \{h_1, \ldots, h_{\lvert H \rvert}\}$ is a set of hyper-arcs.
\end{itemize}
\end{definition}
For a given AND/OR graph $G$, each hyper-arc $h$ in $H$ is a many-to-one mapping from a set of child nodes to a parent node, each node being part of $N$. 
With reference to our toy scenario, and just for the sake of explanation, an AND/OR graph $G_{pp} = \langle N_{pp}, H_{pp} \rangle$ for the single pick-and-place scheme is visualized as a sub-graph in Fig.~\ref{fig:pick_place_graph}, whose set of nodes $N_{pp}$ is made up of $5$ nodes, namely $\textsf{object\_clear}$ ($n_1$), $\textsf{gripper\_empty}$ ($n_2$), $\textsf{object\_in\_hand}$ ($n_3$), $\textsf{object\_on\_table}$ ($n_4$), and $\textsf{target\_on\_table}$ ($n_5$), and the set of hyper-arcs $H_{pp}$ comprises $h_1$, $h_2$, and $h_3$.
It is noteworthy that at this stage we ignore the $\textsf{current\_configuration}$ node ($n_6$, in green in the Figure), the arc $h_4$ originating from $n_6$, and the graph extending from the \textsf{object\_placed} node via the hyper-arc $h_5$, both of them described later. 
In a generic AND/OR graph $G$, each node $n \in N$ represents a \textit{high-level}, symbolic description of a fragment of the domain being modeled, that is, the state (such as \textsf{object\_in\_hand} in $N_{pp}$), whereas each hyper-arc $h \in H$ represents one or more actions needed to transit between pairwise sets of symbolic descriptions.   
For instance, and with reference to $G_{pp}$, in order to achieve the state \textsf{object\_in\_hand}, the gripper must be empty and the object be clear from other occluding objects, that is, the symbolic states \textsf{gripper\_empty} and \textsf{object\_clear} must be achieved before any action leading to \textsf{object\_in\_hand} can be considered feasible.
With the term ``achieved'', we mean here that the symbolic state is satisfied, that is, if it is associated with a Boolean truth value, that value should be \textit{true}, whatever the corresponding semantics may be.
Likewise, with the term ``feasible'' we refer to the possibility that, given that prerequisites encoded in its child nodes are met, the actions represented in a hyper-arc can be executed, thus leading to the state encoded in the hyper-arc's parent node.

It is noteworthy that, with a slight abuse of terminology, we can label a node as feasible should it be the co-domain of at least one feasible hyper-arc.
Therefore, the nodes \textsf{object\_clear}, \textsf{gripper\_empty} and \textsf{object\_in\_hand} are akin to the predicates \textsf{clear\ ?x}, \textsf{gripper\_empty}, and \textsf{in\_hand\ ?x}, respectively, mentioned in Section~\ref{sec:intro}. 
Likewise, hyper-arc $h_1 \in H_{pp}$ induces a directed transition from the child nodes \textsf{object\_clear} and \textsf{gripper\_empty}, that is, the prerequisites, to the parent node \textsf{object\_in\_hand}. In other words, \textsf{object\_in\_hand} is the result of a set of actions enabled by the prerequisites, \textsf{object\_in\_hand} and \textsf{gripper\_empty}. In particular, hyper-arc $h_1$ is an example of a transition inducing a logical \textit{AND} relationship between the prerequisites, that is all the child states should be \textit{true} to achieve the parent node/state. 
Similarly, a single parent node can be the co-domain for different prerequisites in alternative. 
The corresponding hyper-arcs are in logical \textit{OR} with respect to the parent node, and at least one child node must be achieved for the parent node to be first feasible and then maybe achieved as well, irrespective of the truth values of other child nodes.
In the formulation adopted for this paper, nodes without any successors (for example, a \textit{root} node) or children (\textit{leaf} nodes) are referred to as \textit{terminal} nodes. 
Terminal nodes are associated with such semantic labels as either \textit{success}, for example, \textsf{target\_on\_table} in the Figure, or \textit{failure}, that is, \textsf{object\_on\_table}. 

Let us again consider the toy table-top scenario in Fig.~\ref{fig:toy}. 
If the number of objects to be re-arranged were known, then a comprehensive AND/OR graph $G_t$ could be composed using an \textit{appropriate} sequence of multiple pick-and-place schemes, all of them in the form of $G_{pp}$, that is, the above mentioned nodes and the related hyper-arcs. 
Such an AND/OR graph $G_t$ would be a sequence of multiple instances of $G_{pp}$, each one for a different object to re-arrange via a single pick-and-place scheme, until the target object was grasped.
However, the number of object re-arrangements is workspace-dependent, and can not be considered completely and reliably known before the sequence of actions unfolds. 
Therefore, it would not be possible to reliably encode an AND/OR graph structured as $G_t$ as a \textit{static} structure. 
However, we make the following observation: 
the structure of the AND/OR graph $G_{pp}$ defined for the \textit{single}  pick-and-place instance, that is, the states (nodes of $G_{pp}$) and actions (hyper-arcs of $G_{pp}$), remains the same irrespective of the number of object re-arrangements. 
Thus, we can envision a graph $G_t$ initially coinciding with $G_{pp}$, but able to recursively \textit{expand} with further instances of $G_{pp}$ (each one operating on a different object) until the target object is clear and therefore it can be grasped. 
Such a recursive expansion would happen, should the result of the upstream instance of $G_{pp}$ lead to a failure node, in our case \textsf{object\_on\_table}. 
Nevertheless, each iteration of $G_{pp}$ yields a new workspace configuration, in terms of object arrangements, because it implies the displacement of one object as a result.
In order to explicitly represent this aspect during the graph expansion, we need to \textit{augment} $G_{pp}$ with a node representing the current workspace configuration, that is, \textsf{current\_configuration}, and with the related hyper-arcs $h_4$ and $h_5$, as shown in Fig.~\ref{fig:pick_place_graph}.
Such a node represents, for each expansion of $G_{pp}$, the equivalent of a new \textit{initial} description of the workspace. 
Augmented AND/OR graphs are defined as follows.
\begin{definition}
For an AND/OR graph $G=\langle N, H \rangle$, an augmented AND/OR graph $G^a$ is a directed graph represented by the tuple $G^a = \langle N^a, H^a \rangle$ where:
\begin{itemize}
\item $N^a = N \cup \{n^a\}$, with $n^a$ being the augmented node;
\item $H^a = H \cup \{h^a_1, \ldots, h^a_A\}$, where all $h^a_i$ are the augmented hyper-arcs.
\end{itemize}
\end{definition}
The augmented node $n^a$ is referred to as the root node of the augmented graph $G^a$.
With respect to $G_{pp}$, we need to consider an augmented AND/OR graph $G^a_{pp}$ where $n^a$ coincides with \textsf{current\_configuration}, which can be reached via the hyper-arc $h_5$.
Furthermore, each augmented hyper-arc $h^a_i$, namely $h_4$ and $h_5$ in our example, induces a mapping between the \textsf{current\_configuration} node and a node of the related, non augmented graph $G_{pp}$.

It is now manifest that $G_t$ is iteratively expanded as many instances of $G^a_{pp}$ as needed, one for each object to re-arrange.
In particular, in the scenario we consider, we envisage an overall AND/OR graph recursive expansion as follows.
The most trivial case corresponds to being able to successfully grasp the target object without any object re-arrangements.
In this case $G_t$ corresponds to a single instance of $G^a_{pp}$, which we refer to as $G^a_{pp,1}$.
However, most often due to the degree of clutter in the robot workspace, a feasible motion plan to reach the target object does not exist, and therefore objects need to be re-arranged to obtain a successful plan. 
This fact corresponds to a failure in $G^a_{pp,1}$, that is, its node \textsf{object\_on\_table} is satisfied.
Once a cluttering object to be removed is identified (more on this in the next Section), the model entailed by an AND/OR graph in the form of $G^a_{pp}$ is again used (therefore leading to the expansion of $G_t$ to a sequence of $G^a_{pp,1}$ and $G^a_{pp,2}$). 
This is repeated until in a certain iteration of the expansion process the object considered in \textsf{object\_clear} corresponds to the target object, which leads the graph to a success node, that is \textsf{target\_on\_table}.  

In the scenario discussed here, we have in $G^a_{pp}$ only one success node and one failure node.
It is possible to postulate that multiple, alternative success and failure nodes may exist within one AND/OR graph.
Depending on their semantics, this fact may lead to multiply connected AND/OR graphs, and therefore to an AND/OR graph network.
This notion is better formalized as follows.

\begin{definition}
An AND/OR graph network $\Gamma$ is a directed graph $\Gamma = \langle \mathcal{G}, T \rangle$ where:
\begin{itemize}
\item $\mathcal{G} = \{G^a_1, \ldots, G^a_{\abs{\mathcal{G}}}\}$ is a set of augmented AND/OR graphs in the form $G^a_i$;
\item $T= \{t_1, \ldots, t_{\abs{\mathcal{G}}-1}\}$ is a set of transitions such that $G^a_{i+1} = t_i(G^a_i), \ 1 \leq i \leq \abs{\mathcal{G}}-1$.
\end{itemize}
\end{definition}

\begin{figure}[]
\centering
\includegraphics[width=1.0\textwidth]{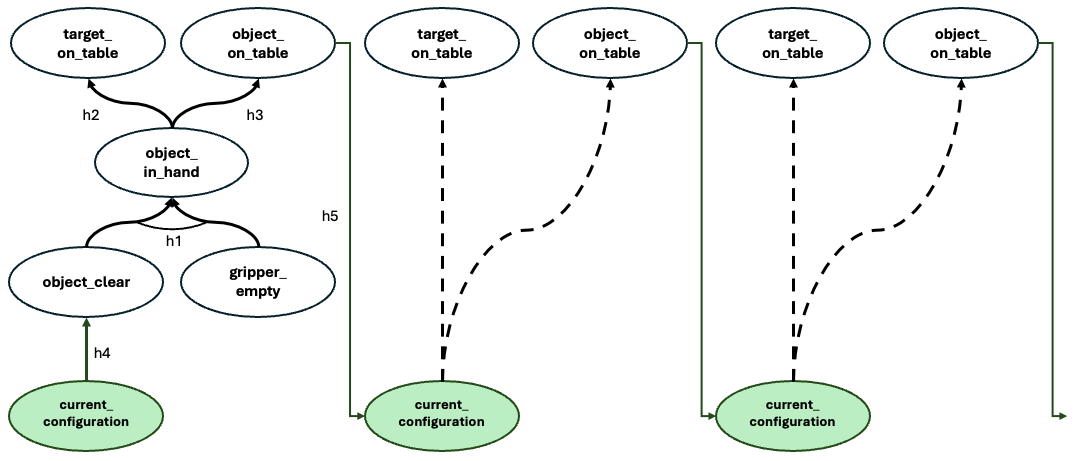}
\caption{A possible AND/OR graph network for the pick-and-place scenario shown in Fig.~\ref{fig:toy}.}
\label{fig:pick_place_graph}
\end{figure}
\noindent While $\abs{\mathcal{G}}$ is the total number of graphs in the network, it can also be interpreted as the depth of the network. 
Furthermore, it is also worth noting that a transition $t_i$ is defined only if the upstream AND/OR graph $G^a_i$ is \textit{unsolved}, that is, if it terminates in a failure node. 
When this happens, $t_i$ leads $G^a_i$ to a new augmented AND/OR graph $G^a_{i+1}$ such that $G^a_{i+1} = t_i(G^a_i)$, and in our scenario this implies that the root node of $G^a_{i+1}$ encodes a changed workspace configuration, leading to an AND/OR graph network as shown in Fig.~\ref{fig:pick_place_graph}. 
We can now specify when an AND/OR graph network is solved.\\

\noindent \textbf{Proposition 1}
An AND/OR graph network $\Gamma = (\mathcal{G}, T)$ is said to be solved at depth $d$ when $G^a_d$ is solved.\\

\noindent Following Proposition 1, an AND/OR graph network with underlying augmented AND/OR graphs $G^a_i$ is expanded till a $G^a_i$ is solved. 
It is necessary to better develop a few insights related to the computational complexity of the whole expansion process.
In Section~\ref{sec:intro} we have discussed the computational complexity for the toy example shown in Fig.~\ref{fig:toy} as if it were modeled using PDDL-like predicates. 
We now analyze the complexity aspects when the same scenario is modeled using our AND/OR graph network. 
Assuming that sufficient time is allotted for the motion planner, in the worst case all the $5$ objects need to be re-arranged leading to $5\times5$ states\footnote{This is computed as $5$ nodes for each object as seen in Fig.~\ref{fig:pick_place_graph} and $5$ iterations due to $5$ objects.}, as opposed to $2^{13}$ using PDDL (see Section~\ref{sec:intro}). 
However, it is noteworthy that, in real-world environments, motion executions may fail due to, for example, actuation or grasping errors. 
This situation does not imply a new arrangement of objects, since the object to be re-arranged remains the same. 
In such a case a new graph $G^a_{i+1}$ would be expanded, retaining the same workspace configuration as in $G^a_{i}$. 
As a consequence, the number of expansions $m$ may be greater than the number of objects (here $m>5$) and hence the time complexity is thus $O(5m) \approx O(m)$. 
In case of PDDL-based planning, the shortest path in the solution space takes $O(n \log n)$ time, where $n = 2^{13}$ is the number of states. 
In general, for an AND/OR graph network with each graph consisting of $n$ nodes, the worst-case temporal complexity is only $O(nm)$.
Similarly, for an AND/OR graph with $n$ nodes, a storage of only $O(n)$ nodes is required.

\section{\textsc{TMP-IDAN}}
\label{sec:approach}

\subsection{Single-robot \textsc{TMP-IDAN}}
\label{subsec:srTAMP}

\subsubsection{System's Architecture}
\label{sub:architecture}

An overview of the architecture supporting \textsc{TMP-IDAN} is given in Fig.~\ref{fig:arch1}. 
In general, the approach requires a robot architecture with the following capabilities:
(i) a \textit{perception layer} to make sense of objects in the robot workspace, 
(ii) a \textit{planning} and \textit{reasoning layer} to synthesize a plan of discrete, symbolic actions, and  
(iii) a \textit{motion planning layer} for executing object manipulation actions while avoiding potential collisions. 
In the current version of the system's architecture, all perception capabilities are encapsulated in a single module, which is called \textit{Scene Perception} (\textsf{SP}). 
This module provides the \textit{Knowledge Base} (\textsf{KB}) module with the information about the current status of the workspace, that is, the location of objects on a table in front of the robot, as well as the robot's configuration. 
\textit{Knowledge Base} acts as a bridge between the perception layer and the planning and reasoning layer, which is made up of three modules, namely the \textit{TMP Interface} (\textsf{TMPI}), the \textit{Motion Planner} module (\textsf{MP}), and the \textit{Task Planner} (\textsf{TP}) module. 
\textit{TMP Interface} receives discrete, symbolic commands from \textit{Task Planner}, maps them to actual geometric features and quantities in the workspace via information encoded in the \textit{Knowledge Base}, and drives the behavior of \textit{Motion Planner}. 
The latter retrieves information regarding the workspace and the robot itself from the \textit{Knowledge Base} to ground action commands with robot motions.
Furthermore, it provides an acknowledgment to the \textit{Task Planner} module upon the execution of a command by the robot via the \textit{TMP Interface} module.
To this aim, \textit{Motion Planner} plans the outcome of robot motions before their actual execution via an internal in-the-loop simulation. 

\begin{figure*}[]
\centering
\includegraphics[width=0.8\textwidth]{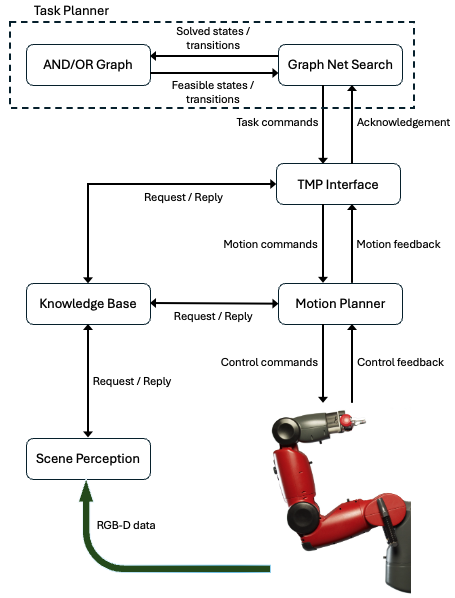}
\caption{System's architecture of the \textsc{TMP-IDAN} framework.}
\label{fig:arch1}
\end{figure*}

\textit{Task Planner} encapsulates two sub-modules, namely the \textit{Augmented AND/OR Graph} (\textsf{AOG}) module and the related AND/OR \textit{Graph Net Search} (\textsf{GNS}) module. 
In a general sense, the \textit{Task Planner} module is in charge of decision making and adaptation of the ongoing parallel hyper-arcs, where by ``parallel'' we mean different sets of feasible actions given the current state. 
To do so, the \textit{Augmented AND/OR Graph} module provides a set of achieved transitions between the states to \textit{Graph Net Search}, and receives the set of allowed states and state transitions to follow, along with the associated costs. 
Once an action is carried out, \textit{Graph Net Search} receives an acknowledgment from the action execution layer, and updates the AND/OR graph data structure accordingly. 
Likewise, the \textit{Knowledge Base} stores all relevant information about the workspace to support the execution of future actions.
The \textit{Task Planner} subscribes to relevant information from the \textit{Knowledge Base} through the \textit{TMP Interface}, for example, whether an object is moved, and this drives the \textit{Graph Net Search} forward.

In order to implement the motion planning layer, we recur to MoveIt~\cite{sucan2013moveit}, and in particular to the variant of RRT~\cite{kuffner2000ICRA} provided therein. 
\textit{Motion Planner} is called to 
(i) achieve the re-arrangement of each object (if any) identified by the \textit{Task Planner} module, and 
(ii) grasp the target object. 
\textit{Motion Planner} requests from the \textit{Knowledge Base} the current state of the workspace, carries out in-the-loop simulations varying motions via RRT and allocations, performs task allocation should more than one robot be available accordingly, and then executes the identified motion plan.

\begin{algorithm}[]
\scriptsize
\caption{\textsc{TMP-IDAN}}
\label{algo}
\Input{\textsf{AOG}, \textsf{TP}, \textsf{TMPI}, \textsf{MP}, \textsf{KB}, \textsf{SP}}
$\textit{\textsf{reached(}gc\textsf{)}} \gets false$
$i \gets 0$
$g \gets G_i$
\While{$\neg \textsf{reached(}gc\textsf{)}$}
{
    $G^a_i \gets (\textit{\textsf{AOG}} \rightarrow NAG(G_i))$
    
    $fts \gets (\textit{\textsf{GNS}} \rightarrow Req(\textit{\textsf{AOG}} \rightarrow GFTS(G^a_i)))$ 
    
    \eIf{$fts = \emptyset$}
    {
        $\textbf{exit}$
    }
    {
        $ots \gets (\textit{\textsf{GNS}} \rightarrow FNOTS(fts))$
        
        \ForAll{$(a,r) \in ots$}
        {
            $nc \gets (\textit{\textsf{GNS}} \rightarrow Req(\textit{\textsf{TMPI}}, \textit{[a,r]}))$
            
            $\textit{\textsf{GNS}} \rightarrow Req(\textit{\textsf{AOG}} \rightarrow Update(nc))$ 
        }
        
        $i \gets i + 1$
    }

    \When{$* \rightarrow Req($\textsf{TMPI}, [a,r]$)$}
    {
        \ForEach{$a \in A$}
        {
            \ForEach{$r \in R$}
            {
                $cs \gets (\textit{\textsf{TMPI}} \rightarrow Req(\textit{\textsf{KB}}))$
                    
                $nc \gets (\textit{\textsf{TMPI}} \rightarrow Req(\textit{\textsf{MP}}, a, cs))$
                        
                \eIf{$nc = gc$}
                {
                    $\textit{\textsf{reached(gc)}} \gets true$
                }
                {
                    $\textit{\textsf{reached(gc)}} \gets false$
                }
 
                $\textbf{return}\; nc$
            }
        }
    }
    
    \When{$* \rightarrow Req($\textit{\textsf{KB}}$)$}
    {
        $ns \gets (\textit{\textsf{KB}} \rightarrow Req(\textit{\textsf{SP}}))$
        
        $\textbf{return}\; ns$
    }
    
    \When{$* \rightarrow Req(\textsf{SP})$}
    {
        $rs \gets (\textit{\textsf{SP}} \rightarrow Read(\textsf{camera\_data}))$

        $ns \gets (\textit{\textsf{SP}} \rightarrow CDP(\textsf{rs}))$
        
        $\textbf{return}\; ns$
    }
    
    \When{$* \rightarrow Req(\textsf{MP}, a, cs)$}
    {
     \ForAll{$actions, agents \in a$}
        {
            $P \gets (\textit{\textsf{MP}} \rightarrow Simulate(a, cs))$            
                }

        $p \gets (\textit{\textsf{MP}} \rightarrow FBT(a, P))$
        
        $d \gets (\textit{\textsf{MP}} \rightarrow Execute({p}))$
        
        \eIf{$d = false$}
        {
            $P=P/p$
            
                \eIf{$P = \emptyset$}
            {
            $nc = NULL$
                       }
              {
            Goto $line$ $35$  
                             }
            }
        {
            $nc \gets \textit{\textsf{GCC}(p)}$
          }
                  
            $\textbf{return}\; nc$
    }
}
\end{algorithm}

\subsubsection{Task Planning with AND/OR Graph Networks}

The AND/OR graph representation provides a framework for the planning of action sequences~\cite{sanderson1988TAES}. 
The feasibility of the motions associated with the high-level, discrete actions is then checked using a suitable motion planner~\cite{karami2020ROMAN} within an in-the-loop simulation process carried out by the \textit{Motion Planner} module. 
Moreover, an AND/OR graph inherently requires fewer nodes than the corresponding complete state transition graph, thus reducing the search complexity of the AND/OR space~\cite{sanderson1988TAES}. 
Yet, as discussed above, such a representation requires that the number of object re-arrangements to retrieve a target object from clutter is known ahead of time. 
As a consequence, this representation seems incompatible with our goal, as we do not know before-hand the number of objects to be re-arranged. 
In order to overcome this limitation, we leverage AND/OR graph networks as discussed in Section~\ref{sec:AOgraph}, wherein an augmented AND/OR graph grows while planning and execution unfold until the target grasping action can be executed. 
It is noteworthy that, depending on the level of abstraction and how we choose to model the actions, it is possible to construct different AND/OR graphs and thereby different, corresponding graph networks for the same table-top scenario (or, for the sake of the argument, any other real-world scenario). 
For example, in this paper, for the single-robot scenarios using a Baxter robot, which we will discuss later, we have designed an AND/OR graph $G_0$, which is shown in Fig.~\ref{fig:combined} (right).
As we discussed above, this basic structure can be recursively grown by replicating it a sufficient number of times.
This originates the combined structure formed by $G_0$ and $G_1$, which constitutes the corresponding AND/OR graph network with depth 2, shown in the same Figure. 
In $G_0$, a node labeled \textsf{INIT\#0}, which represents the initial workspace configuration, is added thereby originating the augmented AND/OR graph $G_0^a$. 
The graph $G_0^a$ encodes the fact that if a feasible grasping trajectory to the target object existed, then the \textsf{target\_picked} state could be reached by performing the actions encoded in the hyper-arc between \textsf{INIT\#0} and \textsf{target\_picked}, which would be a \textit{success} node. 
Otherwise, an object should be identified to be either pushed or removed (this fact can be encoded in the \textit{Knowledge Base} module and it is based on the relative size of the object and the robot grippers). 
Let us consider the case that such target grasping trajectory does not exist, and that the states \textsf{closest\_object\_target\_grasped} and later \textsf{object\_in\_storage} are achieved.
The latter is considered a \textit{failure} node, that is, a node not leading to the overall goal, and therefore exhausting $G_0^a$. 
This leads to the need to add a new graph $G_1^a$, which corresponds to a new augmented graph with the same states and actions as $G_0^a$ but with a different initial workspace configuration due to the traversal along $G_0^a$, as encoded via the augmented node \textsf{INIT\#1}. 
This growth process iteratively repeats itself until a graph $G_{n'}^a$ is solved, which represents an AND/OR graph network $\Gamma$ of depth $n'$. 
The augmented AND/OR graphs $G^a_0, \ldots, G^a_{n'}$ are thus iteratively deepened to obtain an AND/OR graph network whose depth $n'$ is task-dependent and not known in advance. 
We note here that in case a task in $G_i^a$ fails, for example due to motion, actuation or grasping errors, $G_{i+1}^a$ (initialized with the current workspace configuration) is grown and in this way our approach shows some form of robustness with respect to execution failures. 

Workflow~\ref{algo} describes the overall planning procedure, and can be considered as a description of the data flow in the overall robot architecture. 
The whole procedure is essentially an iterative process running until the goal configuration $gc$ (corresponding to the target grasp) is \textsf{reached}. 
Assuming that initially the AND/OR graph coincides with $G_0$ (but what follows can be also applied to any subsequent $G_i$, and therefore in this description we use the index $i$), the \textit{Augmented AND/OR Graph} module augments it as $G^a_i$ by calling the function \textit{NewAugmentedGraph} ($NAG$), which adds the initial workspace configuration (line~3). 
The \textit{Graph Net Search} module then requires \textsf{AOG} to determine the set of feasible transitions and states \textsf{fts} associated with $G^a_i$ via the function \textit{GetFeasibleTransitionsStates} ($GFTS$, line 3). 
If such a set is not empty, then (lines 7-11) the main loop occurs.
First, the set of optimal transitions and states $ots$ is generated by \textit{Graph Net Search} via a call to the function \textit{FindNextOptimalTransitionsStates} ($FNOTS$). Please note that, in general, $ots$ can be a set of different pairs of transitions and states, since more than one pair can have the same cost. 
Then, for each pair of optimal transitions and states, the new configuration $nc$ is obtained through \textsf{TMPI} (line 12).
\textsf{TMPI} calls the $Req(\textsf{MP}, a, cs)$ subroutine (lines 29-42), which is responsible for computing $nc$. 
We observe here that a transition (that is, a hyper-arc) encodes one or more actions needed to transit between states. 
Therefore, for this set of actions and agents (each arm of the robot is considered a separate agent) $Req(\textsf{MP}, a, cs)$ simulates the motion plan. \textit{FindBestTransition} ($FBT$) then extracts the motion plan corresponding to the minimum cost and this plan is then executed (line~36). 
If execution fails, then the next plan is selected (line~38) for execution.
If all plans $P$ are exhausted without a successful execution, then a $NULL$ value is returned.
The $GetCurrentConfiguration$ ($GCC$) returns the new configuration (or $NULL$) resulting from the execution.
If all the $nc$ returned (line 12) are $NULL$ due to execution failures, then a re-try is attempted by augmenting the current graph with a new graph with identical workspace configuration. 
The workspace configuration is identical since all $nc$ are $NULL$ and hence its update results in the same workspace configuration for the augmented graph. 


\subsubsection{Probabilistic Completeness}

We now prove the probabilistic completeness of \textsc{TMP-IDAN}.
\begin{lemma}
\textsc{TMP-IDAN} is probabilistically complete.
\end{lemma}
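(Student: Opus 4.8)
The plan is to reduce the probabilistic completeness of \textsc{TMP-IDAN} to the probabilistic completeness of the underlying sampling-based motion planner (here, the RRT variant in MoveIt), and then to argue that the AND/OR graph network expansion mechanism does not lose any solvable instance. First I would fix a task-motion domain $\Psi$ and assume that a solution exists, i.e.\ there is a finite sequence of discrete actions $a_0, \ldots, a_n$ with $s_{n+1} \in S_g$ and an accompanying sequence of feasible motion plans $\tau_0, \ldots, \tau_n$ satisfying the three conditions in the definition of the task-motion planning problem. The goal is to show that \textsc{TMP-IDAN} returns such a solution with probability approaching $1$ as the time (equivalently, the number of samples drawn by the motion planner at each call, and the number of graph expansions $m$) tends to infinity.

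The argument would proceed in three steps. \textbf{Step 1 (finite search at each depth).} At a fixed network depth $i$, the augmented AND/OR graph $G^a_i$ has a fixed, finite node set $N^a$ and hyper-arc set $H^a$, so \textsf{GNS} explores only finitely many transition/state pairs; \textit{GetFeasibleTransitionsStates} and \textit{FindNextOptimalTransitionsStates} therefore terminate, and each candidate hyper-arc corresponds to finitely many $(a,r)$ action/agent pairs whose geometric feasibility is queried via \textsf{TMPI}$\to$\textsf{MP}. \textbf{Step 2 (per-action motion feasibility).} For any hyper-arc whose prerequisite (child) states are achieved and for which a collision-free trajectory realizing the associated action exists in $C_{free}$, the RRT-based \textit{Motion Planner} finds it with probability $1$ in the limit of infinitely many samples, by the standard probabilistic completeness of RRT; hence a feasible hyper-arc is eventually declared feasible, and an infeasible one is (correctly) never forced. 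Because the node structure of $G^a_{pp}$ (and of $G_0$) is independent of the workspace configuration, the discrete layer faithfully represents the pick-and-place schema at every iteration. \textbf{Step 3 (deepening reaches the required depth).} By the construction described in Section~\ref{sec:AOgraph} and Workflow~\ref{algo}, whenever $G^a_i$ terminates in a failure node the algorithm adds $G^a_{i+1}=t_i(G^a_i)$ whose root encodes the updated workspace configuration; the solution witness above corresponds to some finite depth $d \le n$ (one graph instance per re-arrangement, with additional retries only on execution failure), and Proposition~1 says the network is solved once $G^a_d$ is solved. Since each retry after an actuation/grasping failure re-initializes with the current configuration and the motion planner succeeds with probability $1$ in the limit, with probability approaching $1$ the process reaches depth $d$ and solves $G^a_d$, i.e.\ \textsc{reached}$(gc)$ becomes \textit{true}. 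Combining the three steps yields that \textsc{TMP-IDAN} returns a valid task-motion plan with probability tending to $1$, which is precisely probabilistic completeness.

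I would be careful about two subtleties. First, I must make explicit the assumption (stated informally before the toy example) that re-arranging occluding objects can only \emph{help}, i.e.\ that there is a monotone or at least a \emph{finite-length} sequence of re-arrangements after which the target grasp becomes geometrically feasible; without something of this kind the deepening could in principle run forever even though each individual motion query succeeds. Second, I should note that ``probabilistic completeness'' here is inherited from the motion planner and is conditional on the discrete layer being complete for the class of problems considered — so the statement is really that \textsc{TMP-IDAN} is probabilistically complete \emph{relative to} a probabilistically complete motion planner and a solvable instance.

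\textbf{Main obstacle.} The hard part will be Step~3: arguing rigorously that the iterative deepening terminates at a finite depth on every solvable instance. For the clutter domain this rests on the premise that a bounded number of object removals exposes the target, and for execution failures one must argue that the probability of an unbounded run of failures is zero (each retry succeeds independently with probability bounded below in the limit, so a Borel--Cantelli-type argument gives success almost surely). Handling non-monotone domains (e.g.\ the Kitchen domain, where an object may have to be moved more than once) within the same deepening bound is the most delicate point and is where I would expect the proof to need the most care.
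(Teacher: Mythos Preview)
Your proposal is correct and follows essentially the same decomposition as the paper: separate the argument into (i) completeness of the AND/OR graph network expansion at the task level and (ii) probabilistic completeness of the RRT motion planner, then combine the two limits. The paper's own proof is considerably terser---it simply introduces an explicit depth limit $l$, lets $l\to\infty$ for the task side and $t\to\infty$ for the RRT side, and asserts the product limit---so your Steps~1--3 and the subtleties you flag (finite termination depth on solvable instances, monotonicity of re-arrangements, Borel--Cantelli for retries) are in fact more careful than what the paper provides, but they do not constitute a different route.
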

\begin{proof}
\textsc{TMP-IDAN} is made up of two planning procedures, namely for task and motion planning.
We consider the two separately.

For task planning, we employ an AND/OR graph network \(\Gamma = (\mathcal{G}, T)\).
By design, each sub-graph \(G^a_i\) in \(\mathcal{G}\) terminates at either a success node or a failure node, indicating the completeness of \(G^a_i\). 
Let us define a depth limit \(l\) for the expansion of \(\Gamma\), such that \(\Gamma\) is expanded to at most depth \(l\).
We analyze two cases:
\begin{enumerate}
    \item 
    If there exists a solution within the depth \(l\), then there exists some \(j \leq l\) such that \(G^a_j\) is solved, and consequently, \(\Gamma\) is solved.
    \item
    If no solution is found within depth \(l\), then \(\Gamma\) terminates at \(G^a_l\), and we report a non-existence of solutions.
    This non-existence could be due to either an actual non-existence of such a solution, or the insufficiency of the chosen depth \(l\).
\end{enumerate}
To address the case in which the chosen depth is insufficient, we increment \(l\) and consider the limit as \(l\) approaches infinity.
Formally, as \(l \to \infty\), the probability \(P\) of \(\Gamma\) being solved, if a solution exists, asymptotically approaches \(1\), that is:
\begin{equation}
\lim_{l \to \infty} P(solved(\Gamma)) = 1.
\end{equation}

For motion planning, we observe that we utilize the RRT motion planner~\cite{kuffner2000ICRA}, which is known to be probabilistically complete~\cite{karaman2011IJRR}.
This implies that, given sufficient time, the probability \(P\) of finding a feasible motion $\tau$, if one exists, such that $\tau: [0, 1] \rightarrow C_{free}$, approaches \(1\), that is:
\begin{equation}
\lim_{t \to \infty} P(\tau: [0, 1] \rightarrow C_{free}) = 1.
\end{equation}

Therefore, given that both the task planner based on an AND/OR graph network and the RRT motion planner are designed to be probabilistically complete, we can conclude that as the depth limit \(l\) for the task planner and the time for the RRT planner approach infinity, the probability of finding a solution, if one exists, approaches \(1\).
This establishes the probabilistic completeness of \textsc{TMP-IDAN}.
\end{proof}

\begin{figure}[]
\centering
\includegraphics[width = 1.0\textwidth]{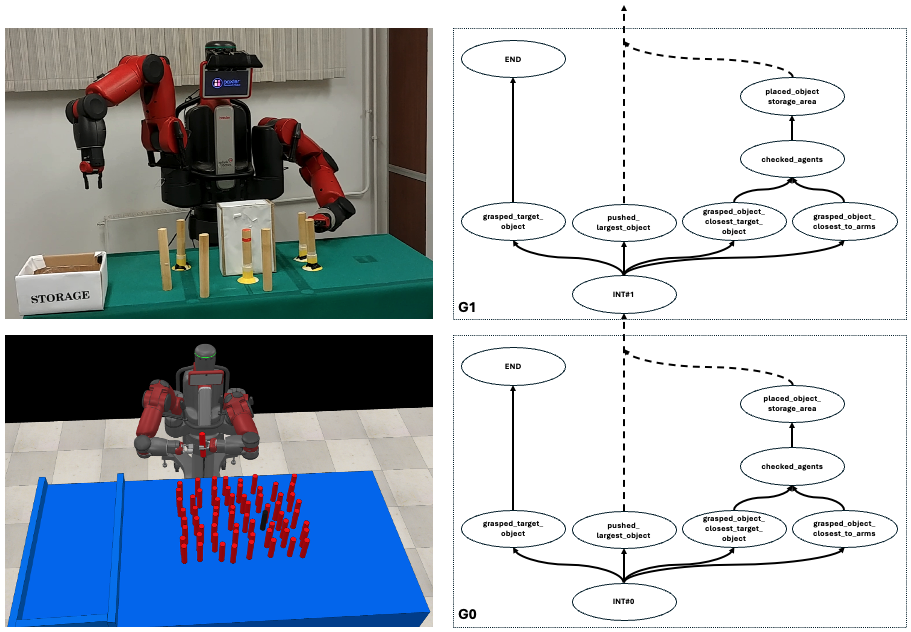}
\caption{\textsc{TMP-IDAN} in a real-world use (top-left), and in simulation (bottom-left).
An AND/OR graph network of depth 2 for the corresponding cluttered scenario (right), where we do not label hyper-arcs.}
\label{fig:combined}
\end{figure}
        
\subsection{Multi-robot \textsc{TMP-IDAN}}

\begin{figure}[]
\centering
\subfloat[]{\includegraphics[width=5.3cm, height=5.3cm]{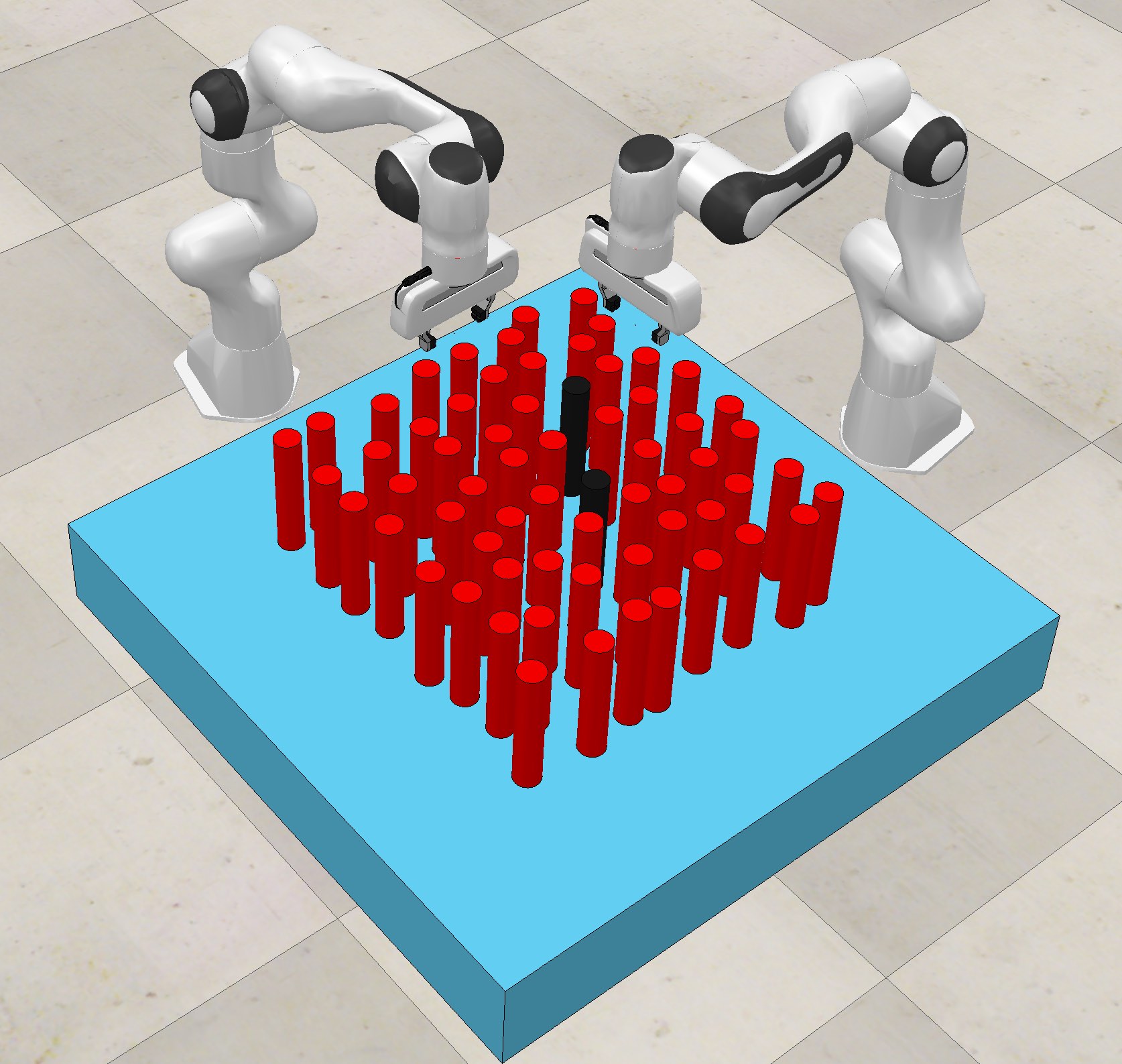}\label{fig:t0}}\hspace{0.1cm}
\subfloat[]{\includegraphics[width=5.3cm, height=5.3cm]{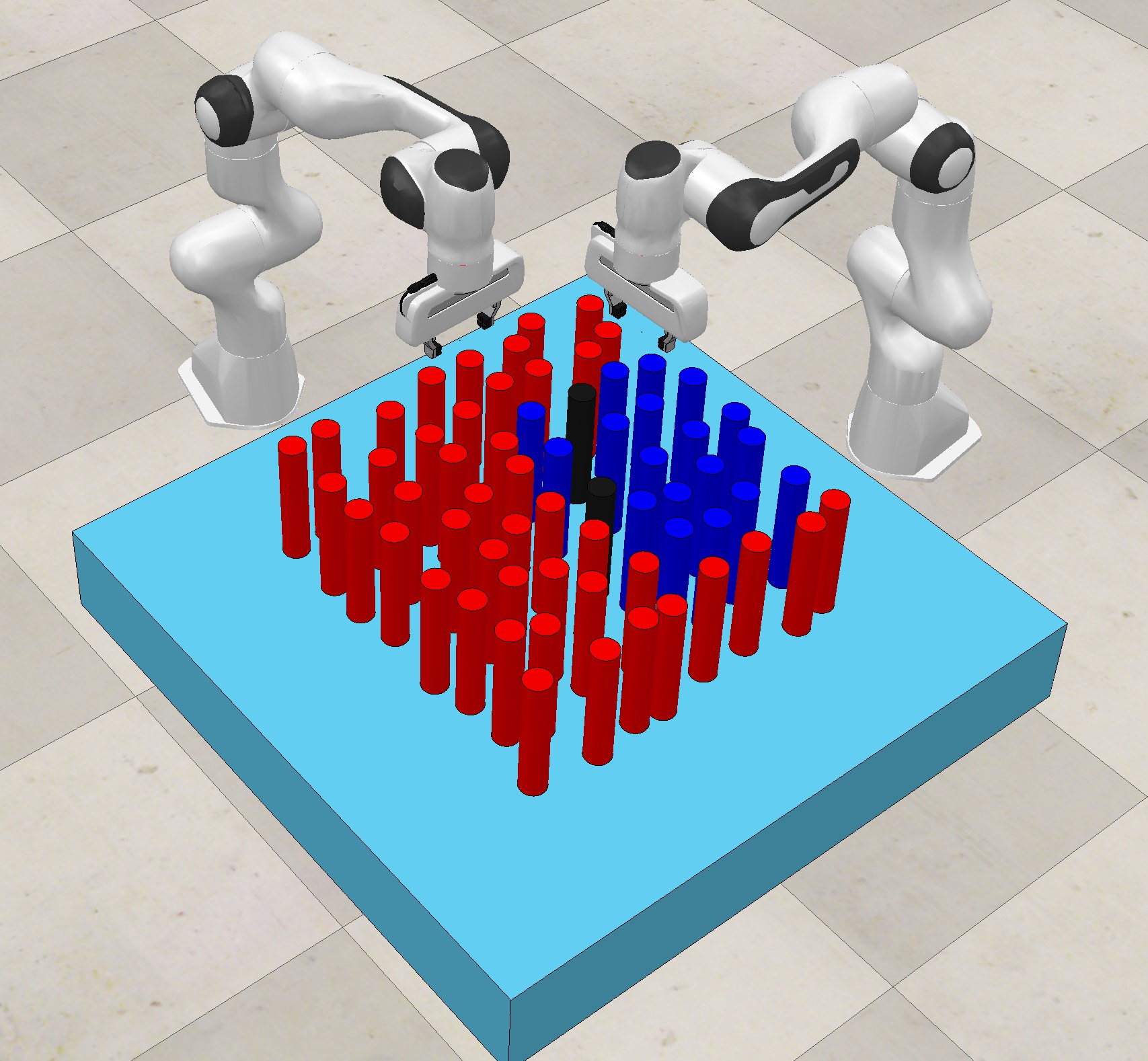}\label{fig:t1}}\\
\subfloat[]{\includegraphics[width=5.3cm, height=5.3cm]{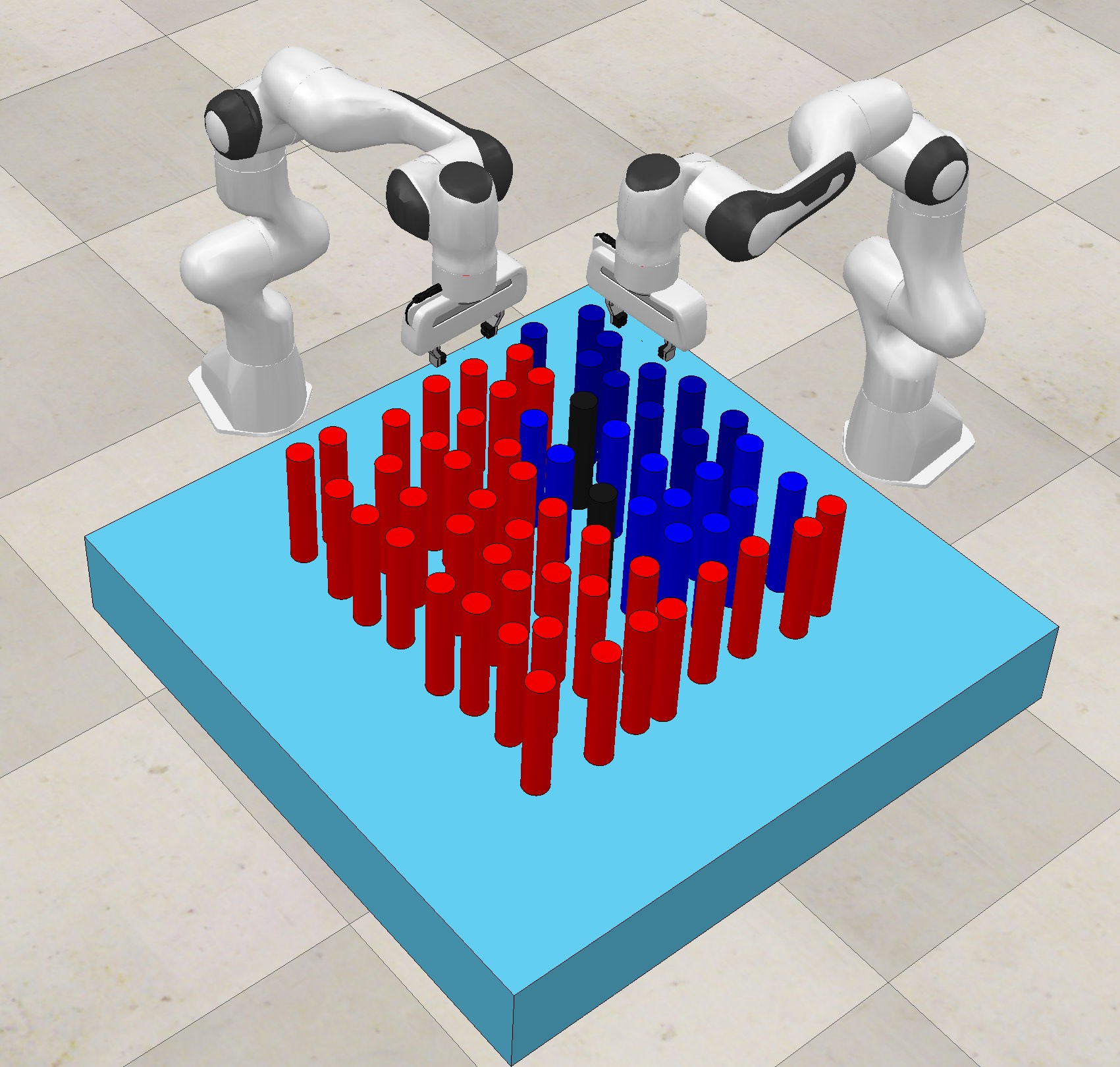}\label{fig:t2}}\hspace{0.1cm}
\subfloat[]{\includegraphics[width=5.3cm, height=5.3cm]{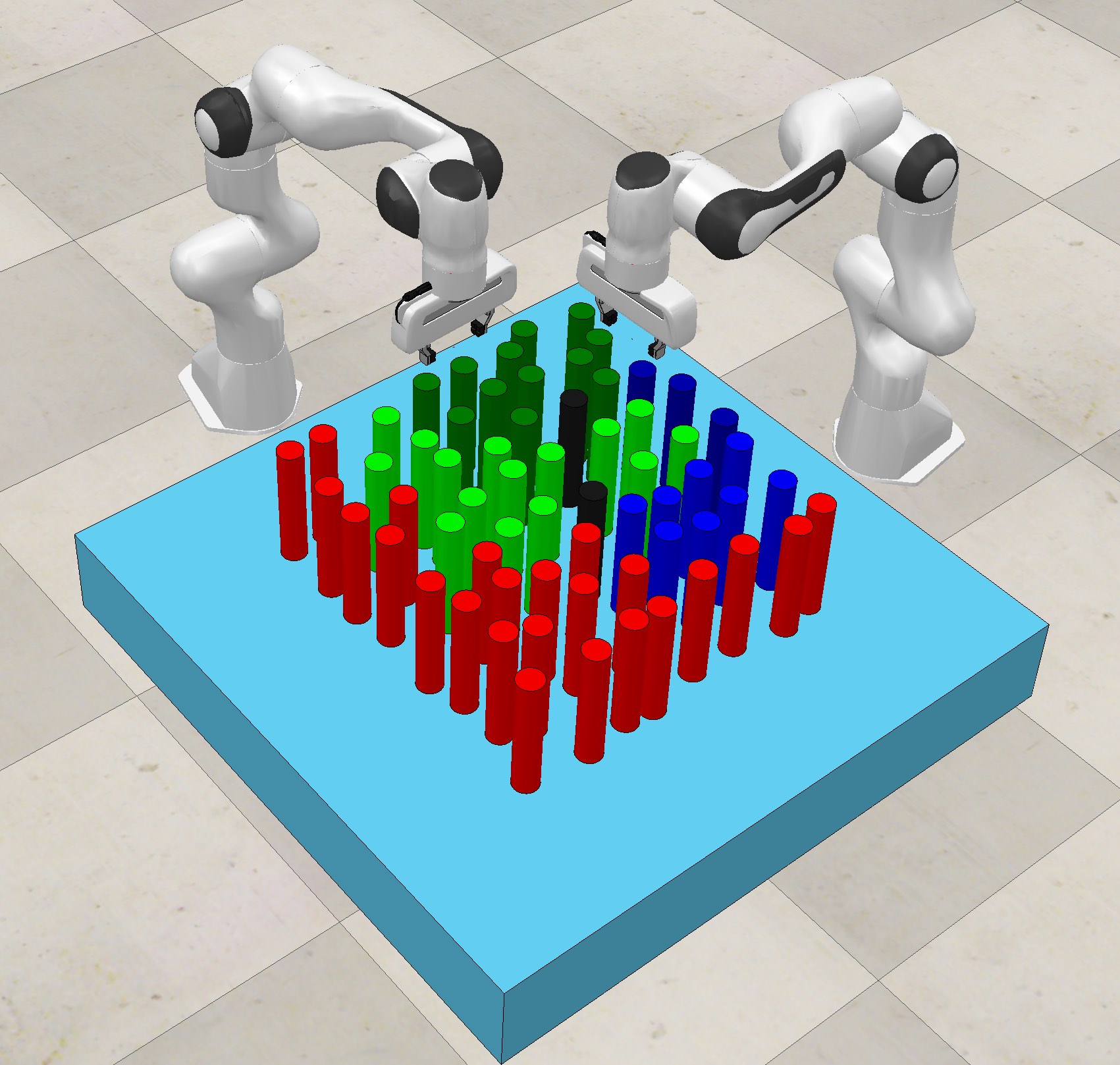}\label{fig:t3}}
\caption{
Illustration of the obstacles identification method:
(a) a cluttered table-top scenario with two robots, R1 (right) and R2 (left), and the target objects in black;
(b) a valid grasping angle range is computed by discretizing a fixed grasping angle range from $-\frac{\pi}{2}$ to $\frac{\pi}{2}$; the objects falling within the grasping angle range of R1 (one target) are shown in blue;
(c) blue objects within the grasping angle range of R1, considering both  targets;
(d) objects within the grasping range of both R1 (blue) and R2 (green), considering both the targets for each robot.}
\label{fig:task}
\end{figure}
We leverage the concepts detailed in the previous Sections, and present a framework for multi-robot TAMP.
As a case in point, we consider a scenario in which objects in a cluttered table-top setting must be re-arranged with the aim of reaching target objects as shown in Fig.~\ref{fig:toy2}.
It is noteworthy, though, that we constrain the robot motions to side grasps only, with the purpose of making the scenario more challenging. 
To this end, task allocation among available robots must be carried out, which is followed by a TAMP method to carry out the allocated tasks.

We begin by describing a simple, heuristic approach providing a rough estimate of which objects must be re-arranged to be able to reach the target object. 
The heuristic approach allows us to define a combined utility function for the multi-robot system to perform task allocation.
It is noteworthy, though, that other approaches are equally legitimate, and multi-robot TMP-IDAN does not rely on this specific approach for its overall behavior.
Allocated tasks are then carried out using \textsc{TMP-IDAN}.

\subsubsection{Identification of the Obstacles}

An illustration of the obstacles identification method can be seen in Figure~\ref{fig:task}.
First, the method finds different, feasible motion plans to reach the target object, each one corresponding to different grasping angles, while purposely ignoring all the obstacles in the workspace. 
This is done by discretizing the set of graspable angles, that is, in the range $\left[-\frac{\pi}{2}, \frac{\pi}{2}\right]$, assuming that the axis is located at the center of the target object.
The available motion plans are then ranked on the basis of their grasping angle.
The maximum and the minimum grasping angles, namely $\alpha$ and $\beta$, are then obtained. 
Two imaginary line segments, starting from the center of the target object, and directed towards the robot, with the angles $\alpha$ and $\beta$, are defined.
The line segments ends where there are no more obstacles along their paths. 
Their end-points are then joined to form a triangle, which is then enlarged on all the three sides by the radius of the bounding volume sphere of the robot's end-effector. 
The objects within the constructed triangle are those to be re-arranged to enable grasping the target object. 

We note that the method we describe to identify the objects to be re-arranged is an heuristic leading to an approximate result.
In fact, the actual set of objects to re-arrange would depend on other such factors as the robot's degrees of freedom, the size of the links and the end-effector, or the \textit{degree} of clutter.
The minimum constraint removal problem~\cite{hauser2014IJRR, thomas2023IAS} identifies the smallest set of obstacles to be re-arranged, while the minimum constraint displacement problem~\cite{hauser2013RSS, thomas2022IAS, thomas2023ICRA} computes the minimum amount by which obstacles must be re-arranged to enable a feasible grasp. 
However, these methods are NP-hard in general.
As it will be discussed later, for all practical purposes we are interested only in an approximate measure so as to perform multi-robot task allocation. 
\subsubsection{Task Allocation}
Let $R$ be the number of available robots, and let $T$ be the number of tasks to be allocated, that is, we have $T$ target objects to be grasped, such that $T \geq R$. 
Task allocation is performed offline.
We assume that each task, that is, picking up a target object from clutter, is performed by a single robot, and that each robot is able to execute only one task at a time. 
Our task allocation strategy falls under the Single-Task, Single-Robot, Time-extended Assignment (ST-SR-TA) taxonomy introduced by Gerkey and Matari{\'c}~\cite{gerkey2004IJRR}, since the multi-robot system contains more tasks than robots. 
In order to allocate tasks to robots, we define $U_{r_it_j}$ as the utility function for a robot $r_i \in R$ executing a task $t_j \in T$.
In this work, utility is defined to be inversely proportional to the number of object re-arrangements required to grasp the target object. 
To determine such a measure, we first (randomly) select a target object $t_1$, and then for each robot $r_i$ we apply the obstacles selection algorithm described in the previous Section. 
For each $r_i$, the heuristic strategy returns a set of objects to be re-arranged in order to reach $t_1$. 
Let us denote this set by $O_{r_it_1}$ (and by $O_{r_it_j}$ for the $j$th task). 
The robot $r_k$ whose set $O_{r_kt_1}$ is of minimum cardinality (that is, , maximum utility) is then allocated task $t_1$.
For the next target $t_2$ this step is repeated. 
We note here that $O_{r_kt_2}$ is computed offline and returns the number of objects to be re-arranged by robot $r_k$ to execute task $t_2$. 
However, it may be the case that some objects appear in both $O_{r_kt_1}$ and $O_{r_kt_2}$, that is, $O_{r_kt_1} \cap O_{r_kt_2} \neq \{\emptyset\}$. 
Since each robot executes one task at a time, $r_k$ can execute $t_2$ only after having performed $t_1$. 
Thus, during the execution phase, $r_k$ may have already removed the common objects while executing $t_1$, and therefore these objects may be ignored to avoid \textit{intra-robot} double counting while computing the utility $U_{r_kt_2}$ offline. 
Once each robot is allocated a task, the set of remaining tasks $T'$ is completed only after the execution of the assigned tasks. 
For the remaining $T'$ tasks, the \textit{inter-robot} or robot-robot double counting must be considered. 
Reasoning in a similar manner for intra-robot double counting, the set $O_{r_it_j}$ restricted to $T \setminus T' < j \leq T$ for the remaining $T'$ tasks may have common objects with respect to $O_{r_it_j}$ restricted to $1 < j \leq T'$ of the assigned tasks. 
Thus, the total number of objects to be re-arranged for robot $r_i$ to execute task $t_j$ is
\begin{equation}
O_{r_it_j}^c = \abs{O_{r_it_j}} - \sum_{k} \abs{O_{r_it_jt_{k}}} - \sum_{k} \sum_{l}\abs{O_{r_ir_kt_jt_l}},
\end{equation}
\noindent where $\abs{\cdot}$ denotes the cardinality of a set, 
\begin{equation}
    O_{r_it_jt_{k}} =
    \begin{cases}
    O_{r_it_j}\cap O_{r_it_k} \ &\text{if $r_i$ allotted $t_k$ previously}, \\
    0 \ &\text{otherwise},
    \end{cases}
\end{equation}
\noindent and
\begin{equation}
    O_{r_ir_kt_jt_l} =
    \begin{cases}
    O_{r_it_j}\cap O_{r_kt_l} \ &\text{if $r_k$ allotted $t_l$ previously}, \\
    0 \ &\text{otherwise},
    \end{cases}
\end{equation}
\noindent with the terms $\abs{O_{r_it_jt_{k}}}$ and $\abs{O_{r_ir_kt_jt_l}}$ modeling the intra-robot and inter-robot double counting, respectively. 
We therefore have the following utility function:
\begin{equation}
    U_{r_it_j} = \frac{1}{1+O_{r_it_j}^c}. 
    \label{eq:utility}
\end{equation}
The maximum utility of $U_{r_it_j} = 1$ is therefore achieved when no object re-arrangement is required to execute task $t_j$, that is, $O_{r_it_j}^c = 0$. 
Using the taxonomy in~\cite{korsah2013IJRR}, we thus have In-schedule Dependencies (ID) -- the effective utility of an agent for a task depends on what other tasks that agent is performing as well as Cross-schedule Dependencies (XD) -- the effective utility of an agent for a task depends not only on its own task but also on the tasks of other agents. 
We now define the combined utility of the multi-robot system, which consists of maximizing
\begin{equation}
    \sum_{i \in R} \sum_{j \in T} U_{r_it_j}x_{r_it_j}\
    \label{eq:comb_utility}
\end{equation}
\begin{equation}
\begin{split}
& \textrm{such that} \, \sum_{i \in R} x_{r_it_j} = 1.\\
\end{split}
\label{eq:comb_utility1}
\end{equation}
\noindent where $x_{r_it_j} \in \{0,1\}$ is an integer and a robot $r_i$ is assigned to the task $t_j$ only when $x_{r_it_j} = 1$. From~\eqref{eq:utility} and~\eqref{eq:comb_utility} we see that the robot with the minimum number of object re-arrangements for a given task is thus assigned the maximum utility. 
In case of a tie, we select the robot which has not been allocated any task. 
If all the robots with the same utility have been allocated tasks already, or if none has been allotted, then a robot is selected randomly. 
\subsubsection{Task Decomposition}
Each manipulation task is decomposed into a set of sub-tasks which correspond to pick-and-place tasks, that is, re-arrangement of the objects that hinder the grasping the target object. 
As seen above, the number of sub-tasks for a given task is not known beforehand. 
Moreover, \textit{multiple decomposability}~\cite{zlot2006phd} is possible since in general a cluttered space can be re-arranged in different ways. 
We seek a decomposition minimizing the number of sub-tasks for the multi-robot system. 
This can be achieved during task allocation since the utility function is computed based on the obstacles selection method. 
In this work, we consider \textit{complex task decomposition}~\cite{zlot2006phd} -- a multiply decomposable task for which there exists at least one decomposition that is a set of multi-robot allocatable sub-tasks. 
Though in this work we ignore the multi-robot allocatability property, this can be incorporated trivially. 
For example, let us consider the case where two robots have the same utility to perform a task $t_j$. 
In this case, the sub-tasks can be equally divided to achieve multi-robot allocatability or one robot may be selected randomly (or depending on previous allocations) to perform the entire task.
\subsubsection{The Multi-robot Task-Motion Planning Loop}
The overall system's architecture of our multi-robot \textsc{TMP-IDAN} method is similar to the single-robot architecture shown in Fig.~\ref{fig:arch1} with the only difference being the \textit{Network Search} module.
Since we have multiple robots and their respective configuration updates, we have multiple \textit{Network Search} modules corresponding to each robot, that is, $n$ such modules for an $n$-robot system.
Once the tasks have been allocated, \textit{Task Planner} selects the abstract actions whose geometric execution feasibility is checked by \textit{Motion Planner}. 
The \textit{Task Planner} layer consists of the \textit{AND/OR Graphs} module -- the initial augmented AND/OR graphs for the robots, and the \textit{Network Search} module -- the search procedure iterating the initial augmented graphs.
As discussed previously, the initial augmented AND/OR graph consists of the task-level actions for each robot augmented with the corresponding initial workspace configuration. 
The \textit{AND/OR Graphs} module provides a set of achievable transitions between the states to \textit{Network Search}, and receives the set of allowed states and transitions as the graph is expanded. 
\textit{Task Planner} then associates each state or state transition with an ordered set of actions in accordance with the geometric locations of objects and the robots. 
The \textit{Knowledge Base} module stores the information regarding the current workspace configuration and uses that to augment the graphs with the aim of facilitating \textit{Network Search}.

As seen before, the \textit{TMP Interface} module acts as a bridge between the task planning and the motion planning layers. 
It receives action commands from \textit{Task Planner}, converts them to their geometric values (for example, a grasping command requires various geometric values such as the target pose or the robot base pose), and passes them on to \textit{Motion Planner} to check motion feasibility. 
To this end, the module retrieves information regarding both the workspace and robots from \textit{Knowledge Base}. 
If an action is found to be feasible, it is then sent for execution. 
Upon execution, \textit{Task Planner} receives an acknowledgment regarding action completion and the \textit{Knowledge Base} is updated accordingly.  
\section{Experimental Results}
\label{sec:results}
In this Section, we analyze \textsc{TMP-IDAN} in the context of both single and multi-robot scenarios.
All experiments are conducted on a workstation equipped with an Intel(R) core i7-8700@3.2 GHz $\times$ 12 CPU's and 16 GB of RAM. 
Simulations have been implemented on CoppeliaSim~\cite{coppeliaSim}.
The software architecture was developed using C++ and Python under ROS Kinetic.
For motion planning, we use MoveIt!\cite{sucan2013moveit}, in particular the RRT planner from OMPL~\cite{sucan2012RAM}, and we place a temporal upper bound of $1$ second.
\subsection{Single-robot \textsc{TMP-IDAN}}
\subsubsection{Retrieving a Target Object from a Cluttered Workspace}

The goal of this class of experiments is two-fold:
(i) characterize the computational performance of TMP-IDAN with respect to a number of controlled parameters, specifically in the case of a \textit{large task space}, with the aim of corroborating our intuition that TMP-IDAN may behave almost linearly;
(ii) evaluate the adaptability of \textsc{TMP-IDAN} in view of \textit{unfeasible actions}, that is, possible failures specifically in the motion planning and execution phases which hinder the planning and execution of discrete actions. We note here that in this work, we do not consider the aspect of \textit{task/motion trade-off}.

To this aim, we have set-up a cluttered table-top scenario, wherein target objects in the form of cylinders and cuboids are to be reached and grasped, as shown in Fig.~\ref{fig:combined} (top-left for the real scenario, and bottom-left for the simulation). 
Experiments are performed using a dual-arm Baxter manipulator from Rethink Robotics.
The robot is equipped with its standard grippers, whereas an RGB-D camera, mounted on its \textit{head} and pointing downward, is used to acquire images for object detection and recognition.
Re-arranged objects are moved to a storage area located close to the robot's right arm, and in case the object to re-arrange is grasped by the left arm, it is first handed to the right arm and then placed in the storage area. For the real scenario, seven cylinders are placed on a table and the robot is required to pick up the object (target) with red tape. The white box has to be pushed since it occludes the target. However, few objects around the white box need to be removed before pushing. The experiment was conducted eight times and the overall planning and execution times are shown in the first four rows of Table~\ref{table:real_scenario}. As seen in the last two rows Table~\ref{table:real_scenario}, the graph was iteratively deepened about 15 times in each experiment with around 130 motion planning attempts. The increased depth and motion planning attempts are due to the failure of the motion planner in finding feasible plans.
\begin{table}
\centering
\begin{tabular}{cc}\hline
    \hline
        AND/OR Graph & 0.06$\pm$0.01 [s] \\
        Graph Net Search &0.09$\pm$0.02 [s]  \\
        Motion Planner &6.20$\pm$0.70 [s] \\
                 d & 15.00$\pm$3.00 \\
         MP attempts& 130.00$\pm$23.00 \\ \hline
      \end{tabular}
     \caption{Quantitative results for the real robot scenario.}
\label{table:real_scenario}
\end{table}

We performed simulations wherein the number of objects ranges from $4$ to $64$.
For a given number of objects, three simulation runs are carried out randomly sampling the target location. 
Fig.~\ref{fig:combined} (right) shows a fragment of the employed AND/OR graph network. 
For the generic AND/OR graph $G_i$, replicated here twice for the first two iterations, several parallel paths are possible. 
The state $grasped\_target\_object$ is reached when the robot is able to reach and grasp the target object without obstacles, and therefore it leads to a successful graph execution, that is, the $END$ state.
Otherwise, if the target object cannot be reached, it is necessary to re-arrange the cluttered workspace by removing obstructing objects.
If the object selected for removal is too big, it can be pushed, which leads to state $pushed\_largest\_object$, and subsequently to a new graph iteration ($G_1$ in the Figure).
Otherwise, if it is possible to grasp an occluding object close to the target one, this can be done, and it leads to state $grasped\_object\_closest\_target\_object$; if that is not possible, an object close to the current robot end-effectors is selected, which leads to the state $grasped\_object\_closest\_to\_arms$. 
Either the latter states lead eventually to the selected object be picked up and placed in the storage area, which leads to the state $placed\_object\_storage\_area$ and therefore to a new graph iteration.
All robot actions are encoded in the hyper-arcs, which are all in \textit{OR}.
Given the fact that we use a dual-arm manipulator, we assume to have two \textit{agents}, namely the Baxter's right and left arms.
Before the execution of a given action realizing a hyper-arc, agents are grounded to actions on the basis, in this case, of geometrical considerations, that is, a picking motion is allocated to the robot arm closest to the object to pick.

\begin{table*}[t]
\centering
\scalebox{0.7}{
\begin{tabular}{ c c c c c c} 
\hline
\textit{Objects}    & Avg. \textit{d}    & \textit{TP} [s]   & \textit{MP} [s]   & \textit{MP attempts}  & \textit{Objects to re-arrange} \\
\hline
\hline
4                   & 1.67          & 0.02             & 1.04             & 15.66                       & 1.33 \\
8                   & 7.33          & 0.07             & 4.57             & 50.66                       & 4.00 \\
15                  & 14.33         & 0.17             & 20.89            & 177.00                      & 7.50 \\
20                  & 57.00         & 0.42             & 61.17            & 400.00                      & 18.00 \\
30                  & 19.66         & 0.19             & 28.59            & 159.66                      & 9.00 \\
42                  & 72.00         & 0.47             & 44.15            & 384.50                      & 18.50 \\
49                  & 26.50         & 0.34             & 24.27            & 201.00                      & 10.00 \\
64                  & 76.00         & 0.66             & 102.58           & 693.00                      & 29.00 \\
\hline
\end{tabular}}
\caption{Trend of the average network depth \textit{d}, the average total task planning time \textit{TP}, the average total motion planning time \textit{MP}, the number of average motion planning attempts and the average number of objects to be re-arranged, as the number of objects varies from $4$ to $64$ in the cluttered workspace domain.}
\label{tab:table0}
\end{table*}

Considering these simulations, Table~\ref{tab:table0} reports the average network depth \textit{d}, the average total task planning time \textit{TP}, the average total motion planning time \textit{MP}, the average number of motion planning attempts, and the average number of objects to be re-arranged.
The trend in task planning time \textit{TP} with the increase of network depth \textit{d} seems to corroborate our intuition on the temporal complexity (see Section~\ref{sec:AOgraph}), and it is almost linear with respect to $d$. 
However, motion planning failures due to grasping failures or occlusion lead to a frequent re-planning, therefore explaining the large number of motion planning attempts. 

\begin{table}[t]
\centering
\scalebox{0.7}{
\begin{tabular}{lcc} 
\hline
\textit{Computation time}   & \textit{Average} [s]    & \textit{Standard Deviation} [s] \\ 
\hline
\hline
AND/OR Graph                & 0.01                                & 0.01 \\ 
Graph Net Search            & 0.01                                & 0.01 \\
Motion Planner              & 0.80                                & 0.22 \\
Motion Executor             & 4.75                                & 2.02 \\
\hline
\end{tabular}}
\caption{Computation times for selected modules of \textsc{TMP-IDAN}, and their corresponding standard deviations, for the cluttered workspace domain with $4$ objects.}
\label{tab:comp1}
\end{table}

Table~\ref{tab:comp1} shows the average execution times for a selection of the modules introduced in Section~\ref{sec:approach}.
It is worth mentioning that \textit{Motion Executor} is a \textit{virtual} module representing the time it takes for the robot to execute the planned motion trajectories. 
As it can be observed, the \textit{discrete} part of the planning process, namely the one implemented by the concurrent activity of the modules \textit{AND/OR Graph} and \textit{Graph Net Search}, exhibits low average running times and a small standard deviation.
The \textit{Motion Planner} module is relatively more computationally demanding, with an increased standard deviation due to the need to generate different, feasible trajectories.
However, both the discrete and continuous planning processes are fast if compared with the execution of movements, as one would expect.

\begin{figure}[H]
\centering
\subfloat[]{\includegraphics[width=7cm]{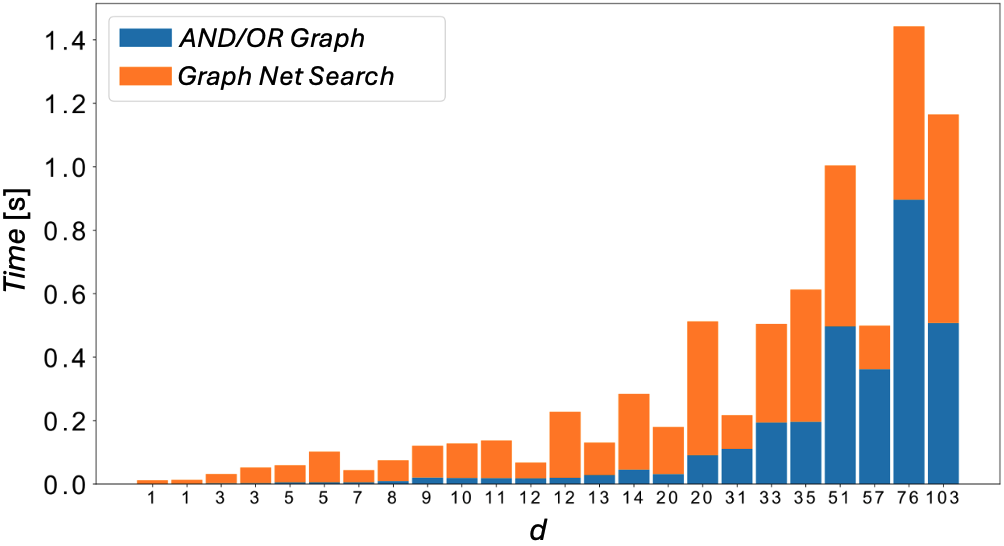}\label{fig:compa}}\\
\subfloat[]{\includegraphics[width=7cm]{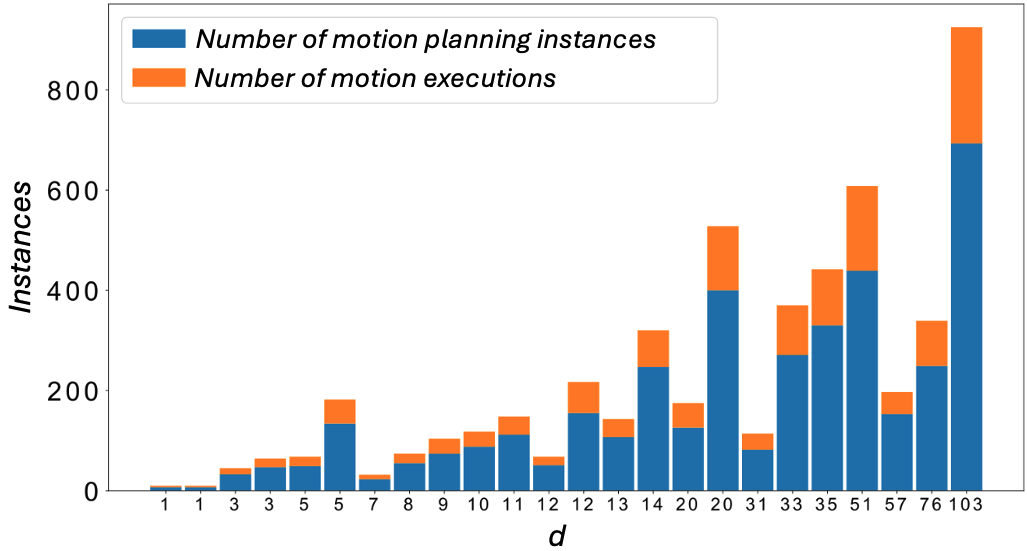}\label{fig:compb}}\\
\subfloat[]{\includegraphics[width=7cm]{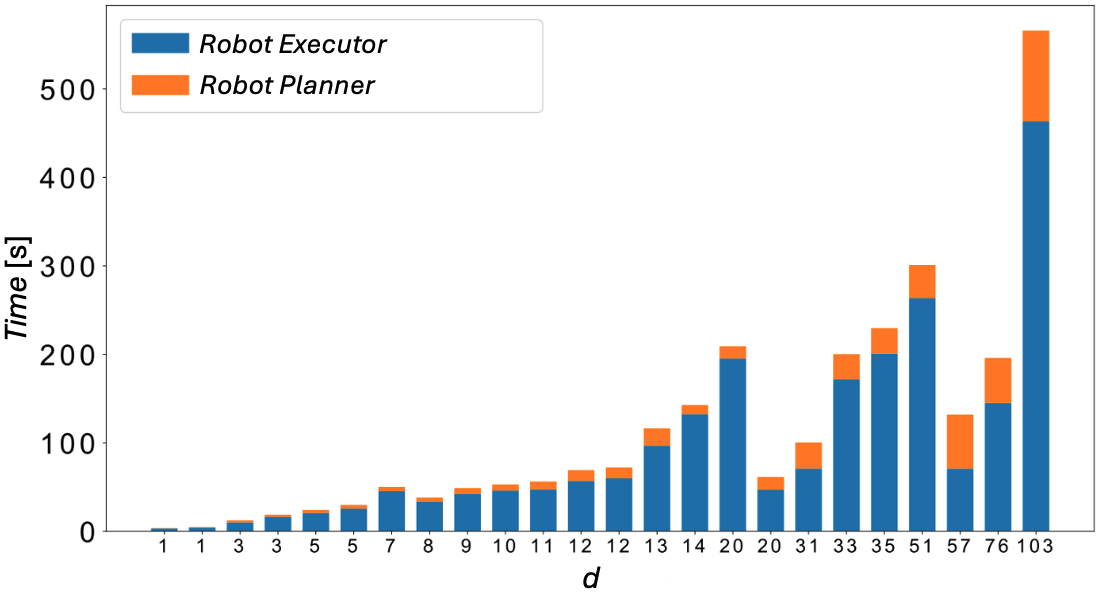}\label{fig:compc}}
\caption{Various trends associated with an increasing number of depth \textit{d} for the cluttered workspace domain: 
(a) for small values of \textit{d} the computational time of \textit{Graph Net Search} dominates the one of \textit{AND/OR Graph}, whereas as \textit{d} increases the two computational times become comparable;
(b) the number of motion planning instances is very high compared with actual executions, since \textit{Motion Planner} tries various motion strategies before them being executed;
(c) the increase in \textit{d} has a minor effect on the overall planning and execution time, as execution takes significantly more time over planning.}
\label{fig:comp}
\end{figure}

In addition to the computational complexity analysis reported in Table~\ref{tab:table0}, Fig.~\ref{fig:comp} shows different plots with an increasing network depth $d$. 
Fig.~\ref{fig:comp}(a) shows the total task planning time with $d$.
As seen above, planning times are almost linear with $d$. 
However, slight deviations are observed, for example, the time for $d=76$ is greater than the time for $d=103$. 
This is due to the fact that in many cases, due to motion planning failure, a new graph is expanded before reaching the terminal node.
Therefore, for $d=76$, more nodes are traversed when compared to the case with $d=103$. 
Fig.~\ref{fig:comp}(b) reports the number of motion planning attempts and the total motion executions. 
This values depend on the degree of clutter, and thereby on the number of required object re-arrangements. 
This trend is more clearly observed in the last two columns of Table~\ref{tab:table0} where an increase in the number of motion planning attempts can be seen with an increased number of object re-arrangements. 
Finally, in Fig.~\ref{fig:comp}(c), total motion and execution times are shown, where it is evident that motion execution always take longer time than planning. 

\begin{figure}[H]
\centering
\subfloat[]{\includegraphics[width=0.48\textwidth,height=6cm]{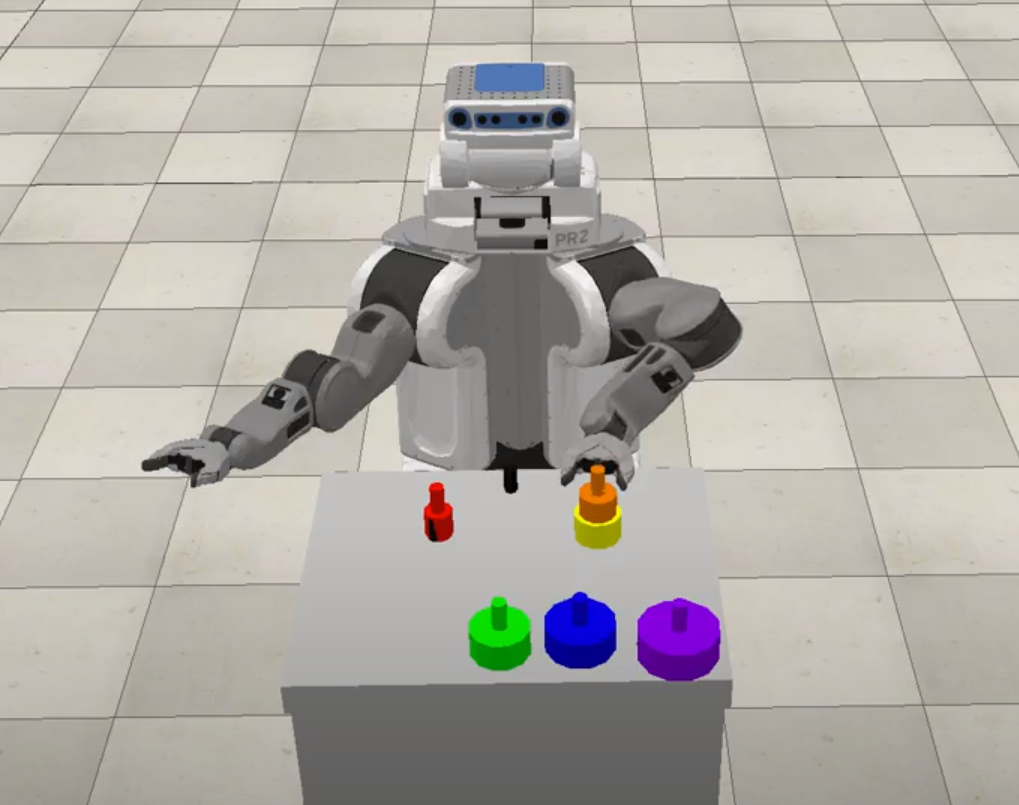}\label{fig:hanoi0}}\hfill
\subfloat[]{\includegraphics[width=0.48\textwidth,height=6cm]{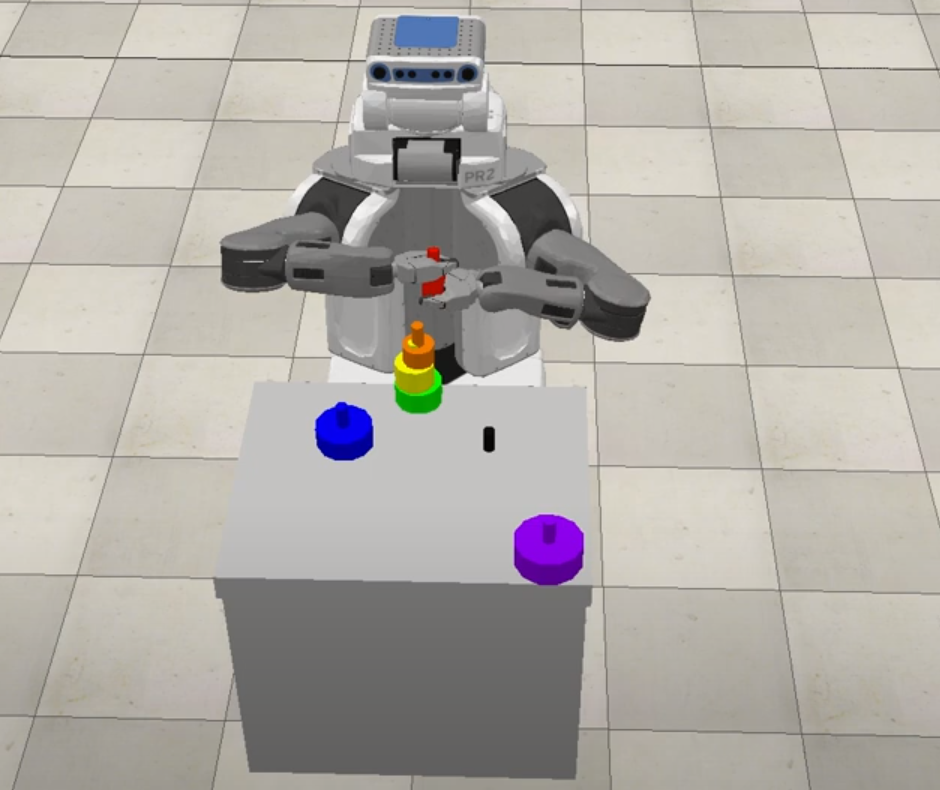}\label{fig:hanoi1}}\\
\subfloat[]{\includegraphics[width=0.48\textwidth,height=6cm]{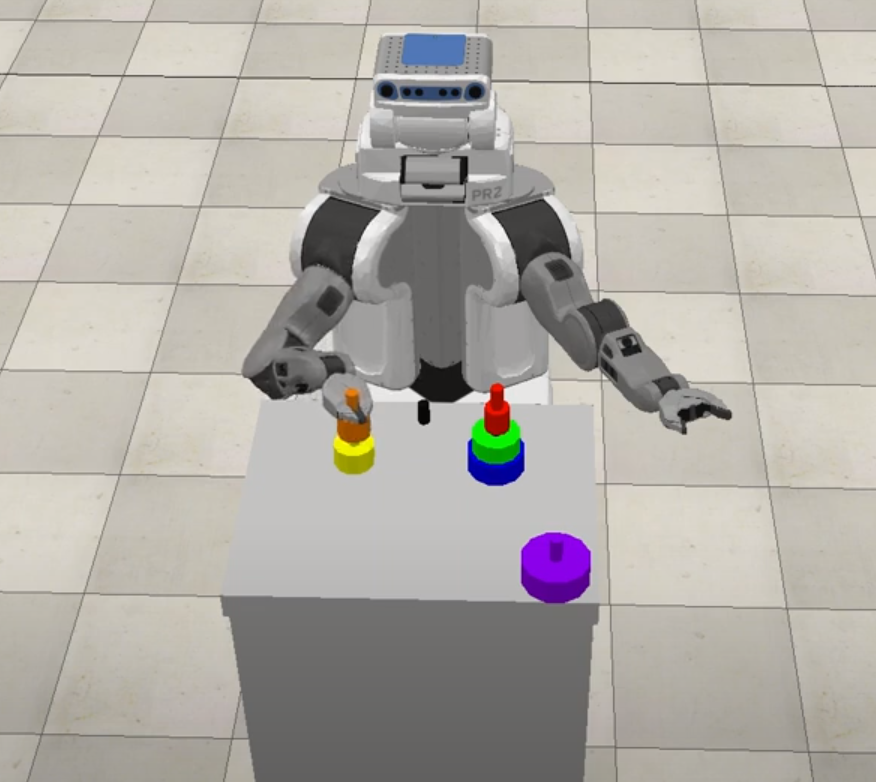}\label{fig:hanoi2}}\hfill
\subfloat[]{\includegraphics[width=0.48\textwidth,height=6cm]{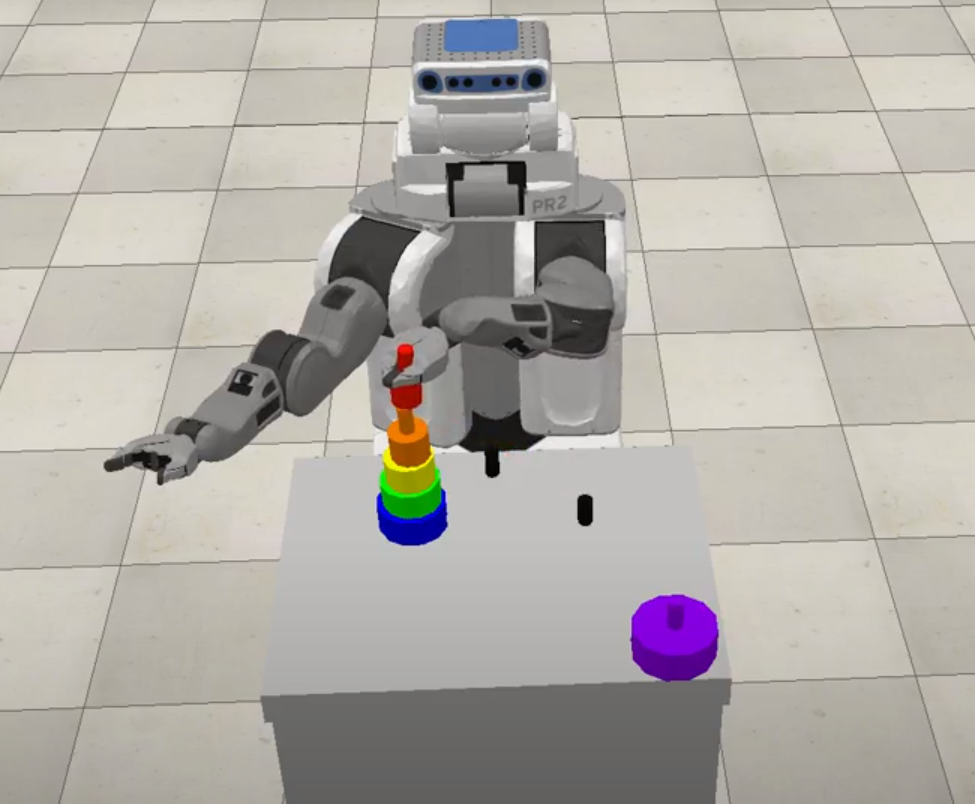}\label{fig:hanoi3}}
\caption{A PR2 robot solves a Tower of Hanoi problem in the simulation environment: (a) PR2's left arm picks a disk up to place it onto an intermediate peg;
(b) the left arm handovers a disk to the right arm due to its kinematic limits in reaching the destination peg;
(c) the right arm places a disk onto the left peg;
(d) five disks are placed to their destination peg successfully.}
\label{fig:tamp_hanoi}
\end{figure}
\subsubsection{Solving the Tower of Hanoi Game}
The goal of this class of experiments is to analyze the capabilities of \textsc{TMP-IDAN} in terms of unfeasible actions and large task spaces in a domain with different characteristics with respect to the one in which the robot must retrieve objects from a cluttered workspace, as discussed in the previous Section.

We simulate a scenario that is a variant of the classic Tower of Hanoi problem, wherein disks stacked on one rod in order of decreasing size must be moved to another rod, as shown in Fig.~\ref{fig:tamp_hanoi}.
It is important to note that, for each rod, disks can be placed only on bigger disks, and only one disk can be moved at each step. 
Experiments are performed with a dual-arm PR2 mobile manipulator from Willow Garage.
Also in this case the robot is equipped with standard grippers and an RGB-D camera for object detection and recognition.
The PR2 base is fixed in front of a table, and three rods are therein arranged in a triangular shape, such that stacking a disk on a rod may prevent (temporarily) picking or placing another disk on any other rod.
It must be observed that, given $n$ disks, the optimal solution involves $2^n -1$ disk arrangements, assuming that each action is feasible. 
We performed simulations wherein the number of disks is between $3$ and $6$.
For a given number of disks, three simulations are carried out. 
One possible AND/OR graph for the Tower of Hanoi problem is depicted in Fig.~\ref{fig:hanoi_graph}, which comprises 21 nodes and 33 hyper-arcs.
In the generic graph $G_i$, which models the movement of a single disk, after ensuring that the state of the workspace has been acquired and processed, thus reaching the $updated$ state, it is evaluated whether it is possible to move a disk currently located at a certain rod to one of the other two rods, and one specific move is selected, which lead to one of the states in the form $checked\_ij$, where $i$ and $j$ represent, respectively, the \textit{from} and \textit{to} rods.
On the basis of geometric considerations, it is decided which rod to pick a disk from, which leads to one of the states in the form $picked\_i$, where $i$ refers to the \textit{from} rod.
If the disk can be placed on the \textit{to} rod by the same robot arm, the graph reaches one of the states $placed\_to\_rod\_j$, with $j$ different from $i$.
Otherwise, if the selected \textit{to} rod is not reachable by the robot arm holding the disk, an handover occurs between the two arms, which is modelled by the sequence of states $handed\_over\_left$, $handed\_over\_right$, and $handed\_over$, followed by the actual placement of the disk on the selected rod.
Once a placement is done, a check is made whether the current configuration is the final one, which leads to the state $staged$: if so, via the state $all\_done$ the graph is solved; otherwise, a new iteration is done via the state $not\_done$.
As in the previous domain, all robot actions are encoded within the hyper-arcs, which are all in \textit{OR}.
Since PR2 is a dual-arm manipulator, also in this case we assume to have two \textit{agents}, namely PR2's right and left arms.
The agents are grounded to actions on the basis of geometrical considerations, that is, which is the best arm to pick up a given disk is computed at run-time before the action is executed.

Table~\ref{tab:hanoi_av} reports, for a number of disks between $3$ and $6$, the average network depth \textit{d}, the average total task planning time \textit{TP}, the average motion planning time with the right arm \textit{Right MP}, the average number of motion planning attempts with the right arm, the average motion planning time with the left arm \textit{Left MP}, and the average number of motion planning attempts with the left arm, along with the respective standard deviations.
In general, the results in the Table confirm our initial intuitions, and align with what we observed previously. 
It is worth mentioning that as long as the number of objects increases from $3$ to $6$, the number of motion planning attempts for the two arms increases significantly.
It can also be observed that the attempts for the right and left arms are not well-balanced, which is due to the specific configuration of the workspace and the task.
An analysis of the trend in depth $d$ highlights an interesting fact about the iterative nature of TMP-IDAN.
As we anticipated, if $n$ is the number of disks in the problem instance, it is possible to compute the number of needed pick-and-place actions to optimally solve the problem, that is, $2^n-1$.
Ideally, this number should be equal to $d$, since the single graph $G_i$ represents a single pick-and-place task. 
However, we observe that actual values of $d$ approximately double the theoretical value, and tend to diverge from it for bigger values of $n$.
Considering only the averages, we have $d=16.67$ in place of $d=7$ when $n=3$, $d=31$ in place of $d=15$ when $n=4$, $d=61$ in place of $d=31$ when $n=5$, and $d=63$ in view of $d=140$ when $n=6$. 
As noted before, this is mainly due to the fact that some of the symbolic actions are unfeasible at the geometrical level. 
For many actions, due to obstacles (since the disks are thick, stacking a single disk on a rod may prevent picking or placing another disk on any other rod) and kinematic limits of the robot arms, the robot may not be able to move a disk from its place to the destination. 
This requires the use of intermediate pegs, which results in the need to iteratively add AND/OR graphs in the form of $G_i$. 
Furthermore, motion planning failures due to
possible actuation errors or grasping failures may lead to further re-planning, which would further add to the network depth. 
\begin{figure}[H]
    \centering
    \includegraphics[width=0.9\textwidth]{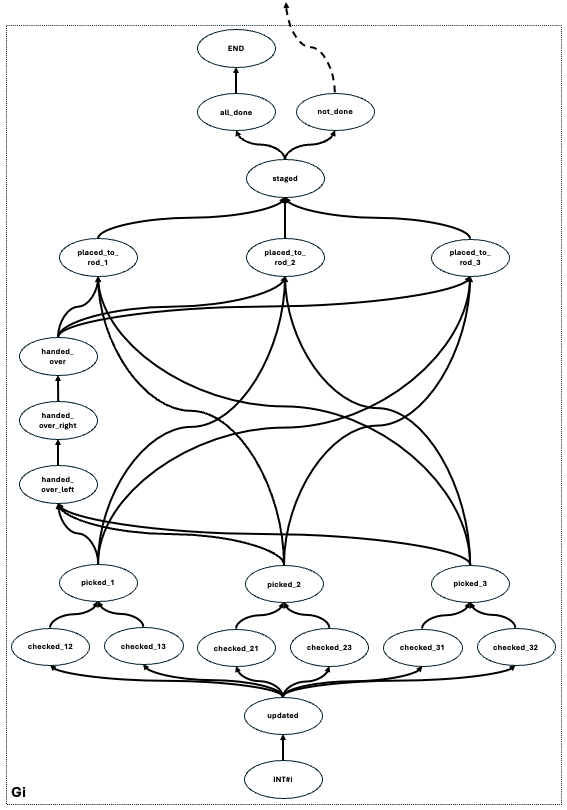}
    \caption{An AND/OR graph for the Tower of Hanoi problem; all hyper-arcs are in OR, and are not labeled.}
    \label{fig:hanoi_graph}
\end{figure}

\begin{table}[t]
\centering
\scalebox{0.7}{
\begin{tabular}{ccccccc} 
\toprule
\textit{Objects}    & \textit{d}        & \textit{TP} [s]   & \textit{Right MP} [s] & \textit{Right MP attempts}    & \textit{Left MP} [s]  & \textit{Left MP attempts} \\
\hline
\hline
3                   & 16.67$\pm$3.51    & 0.78$\pm$0.01     & 105.67$\pm$18.10      & 76.06$\pm$0.74                & 111.37$\pm$12.25      & 88.10$\pm$4.43 \\
4                   & 31.00$\pm$8.18       & 3.27$\pm$0.37     & 245.64$\pm$47.92      & 200.22$\pm$20.09              & 186.65$\pm$15.80      & 129.30$\pm$0.29 \\
5                   & 61.00                & 15.20$\pm$0.61     & 458.24$\pm$1.86       & 442.70$\pm$20.31               & 409.75$\pm$14.49      & 315.90$\pm$16.31 \\
6                   & 140.00               & 59.73             & 1017.30               & 854.15                        & 678.68                & 475.50 \\
\bottomrule
\end{tabular}}
\caption{Trend average and standard deviation values for different parameters of the Tower of Hanoi problem, namely the network depth \textit{d}, the total task planning time \textit{TP}, the total motion planning time \textit{MP} for the right arm, the number of motion planning attempts for the right arm, the total motion planning time \textit{MP} for the left arm, and the number of motion planning attempts for the left arm, as the number of objects varies from $3$ to $6$.}
\label{tab:hanoi_av}
\end{table}
\begin{figure}[H]
    \centering
    \includegraphics[scale=0.27]{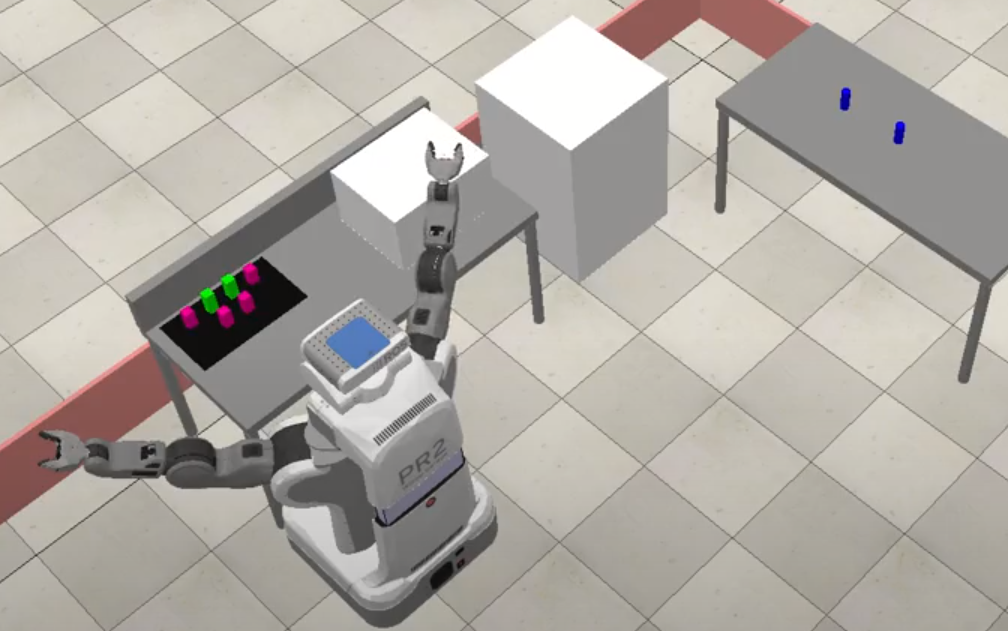}
    \caption{A PR2 robot at work in a kitchen environment. Cabbages are represented by green blocks, radishes by pink blocks, and glasses by blue blocks.}
    \label{fig:kit}
\end{figure}

\begin{figure}[H]
\centering
\subfloat[]{\includegraphics[width=3.9cm,height=2.9cm]{figures/kit_1.png}\label{fig:kit1}}\hfill
\subfloat[]{\includegraphics[width=3.9cm,height=2.9cm]{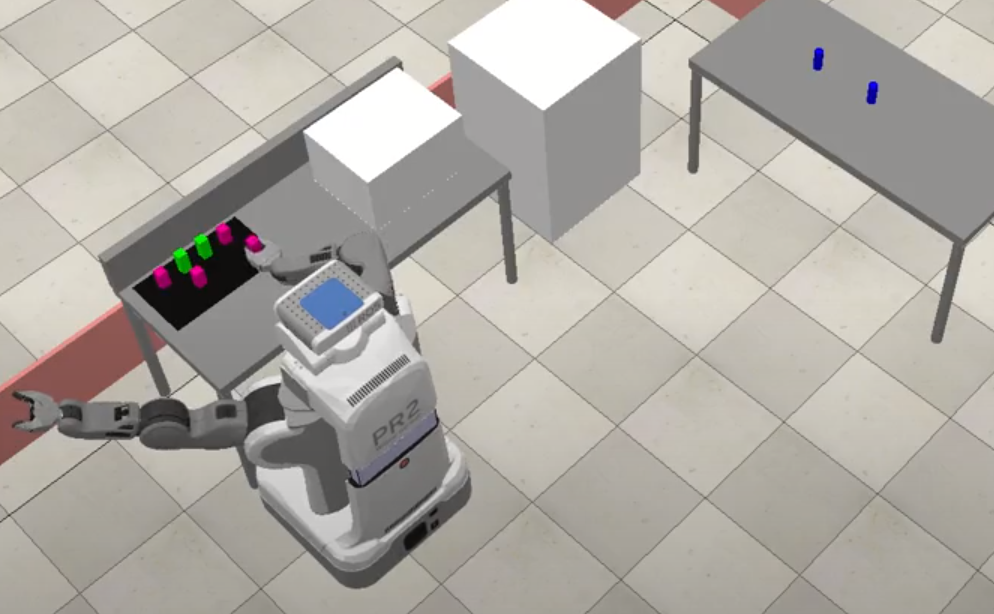}\label{fig:kit2}}\hfill
\subfloat[]{\includegraphics[width=3.9cm,height=2.9cm]{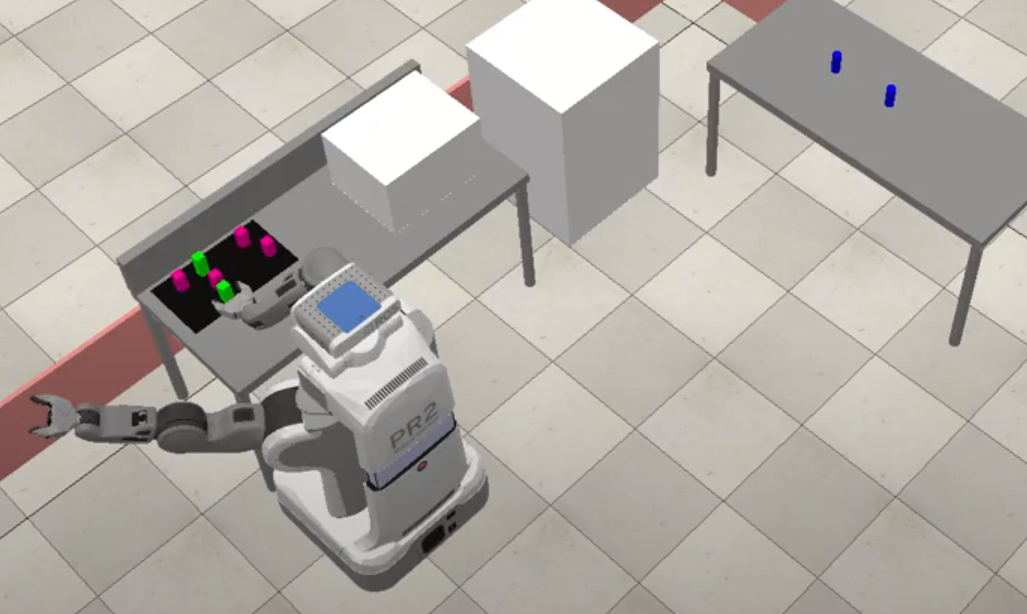}\label{fig:kit3}}\\
\subfloat[]{\includegraphics[width=3.9cm,height=2.9cm]{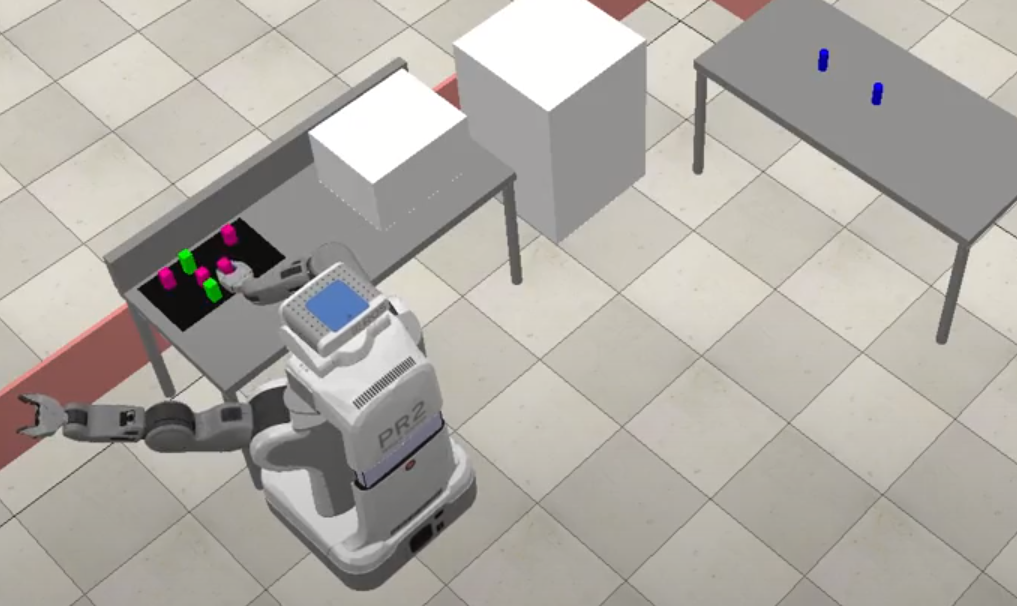}\label{fig:kit4}}\hfill
\subfloat[]{\includegraphics[width=3.9cm,height=2.9cm]{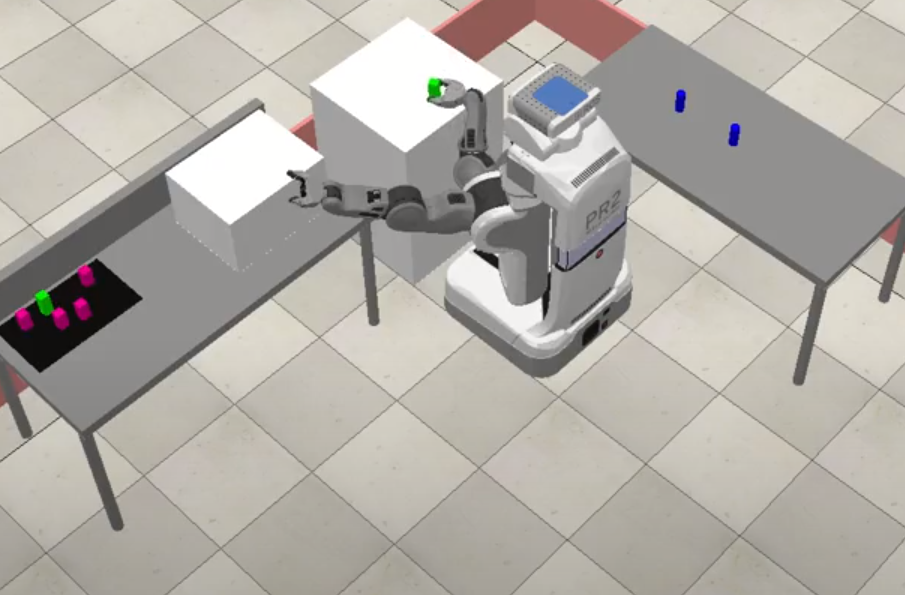}\label{fig:kit5}}\hfill
\subfloat[]{\includegraphics[width=3.9cm,height=2.9cm]{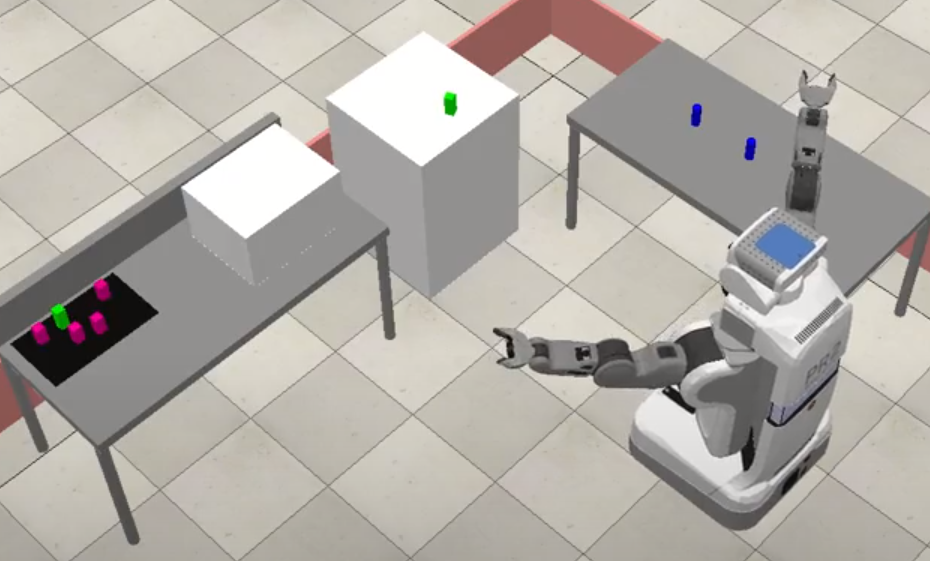}\label{fig:kit6}}\\
\subfloat[]{\includegraphics[width=3.9cm,height=2.9cm]{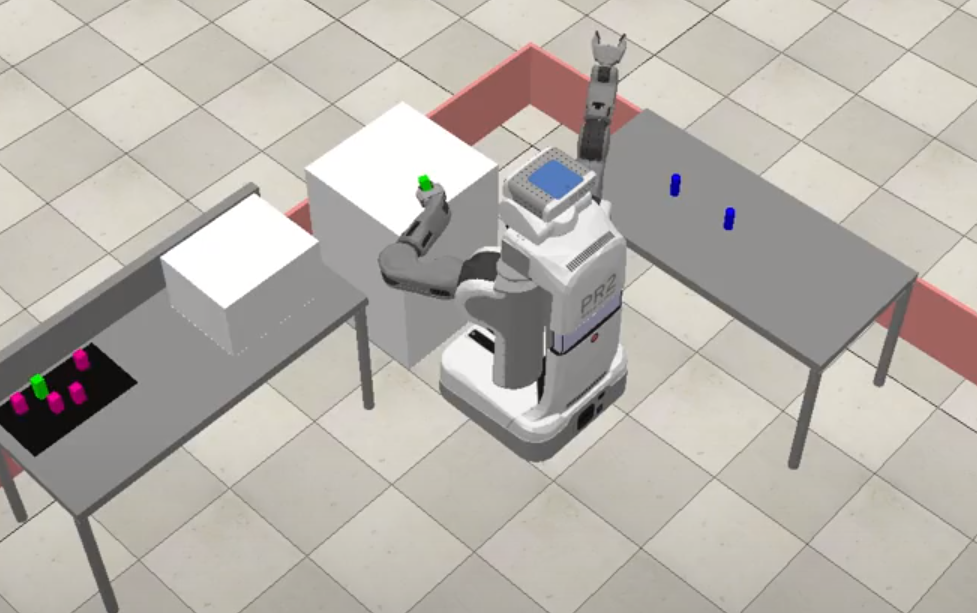}\label{fig:kit7}}\hfill
\subfloat[]{\includegraphics[width=3.9cm,height=2.9cm]{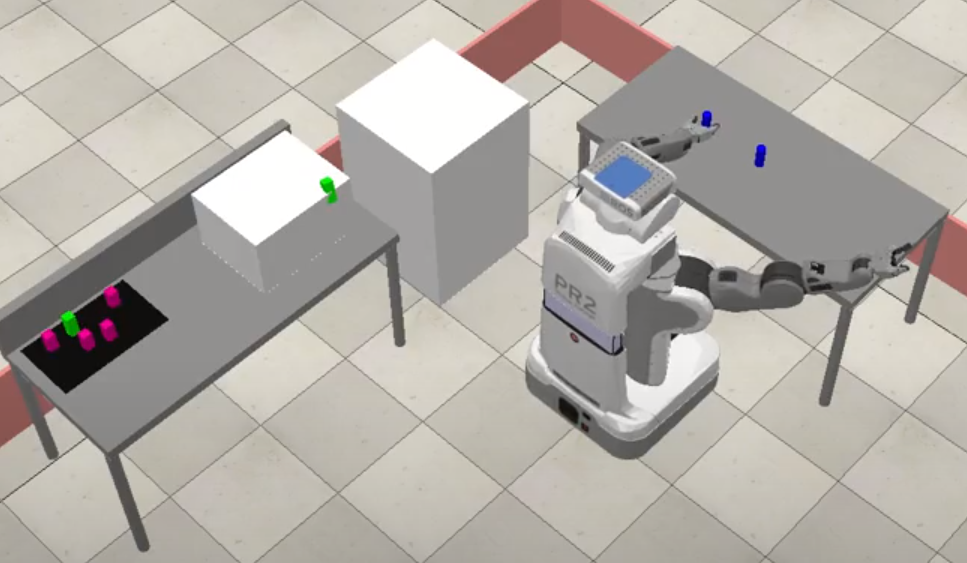}\label{fig:kit8}}\hfill
\subfloat[]{\includegraphics[width=3.9cm,height=2.9cm]{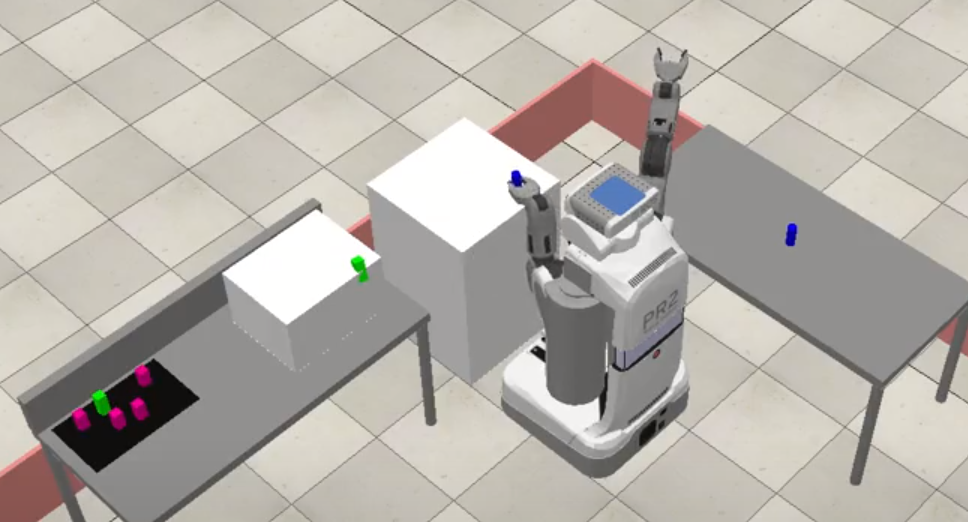}\label{fig:kit9}}\\
\subfloat[]{\includegraphics[width=3.9cm,height=2.9cm]{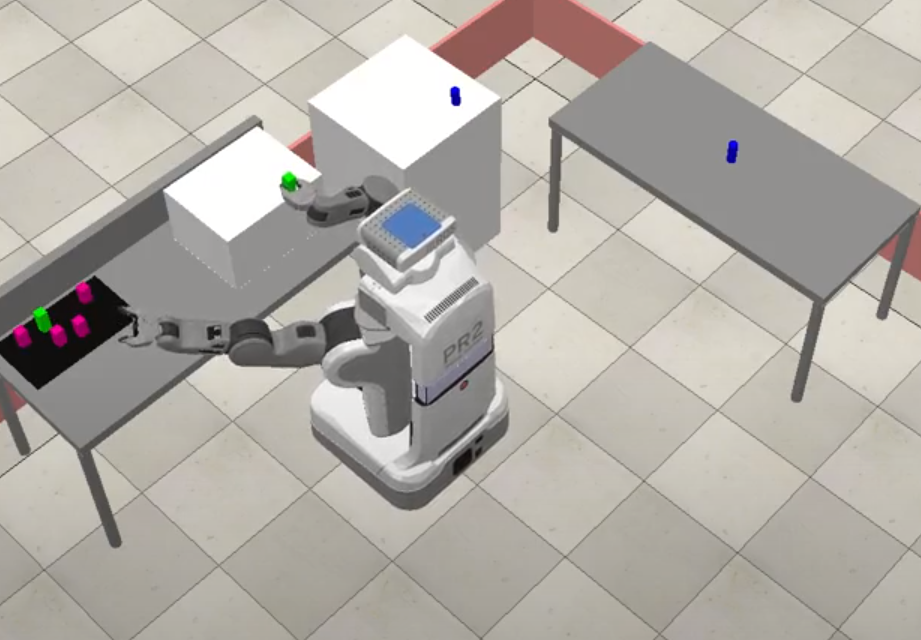}\label{fig:kit10}}\hfill
\subfloat[]{\includegraphics[width=3.9cm,height=2.9cm]{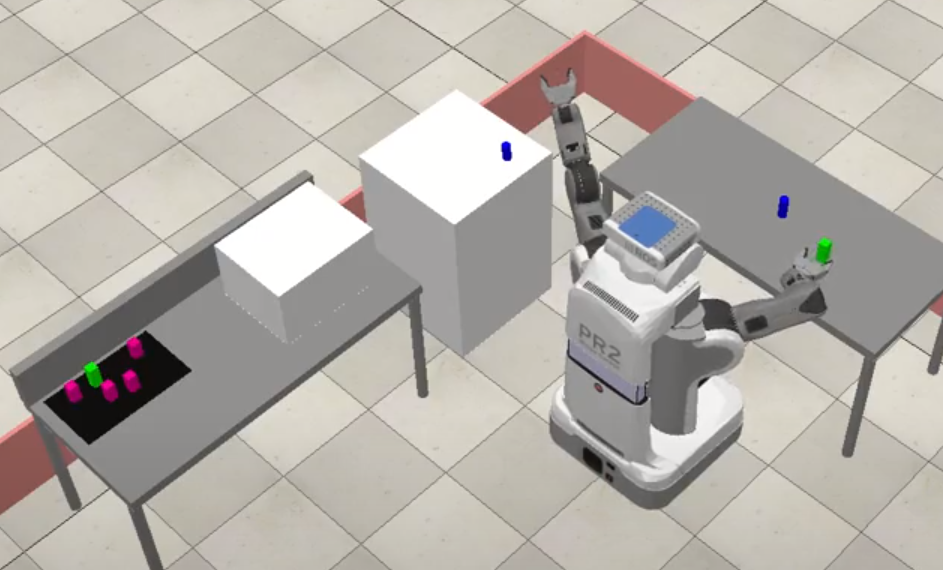}\label{fig:kit11}}\hfill
\subfloat[]{\includegraphics[width=3.9cm,height=2.9cm]{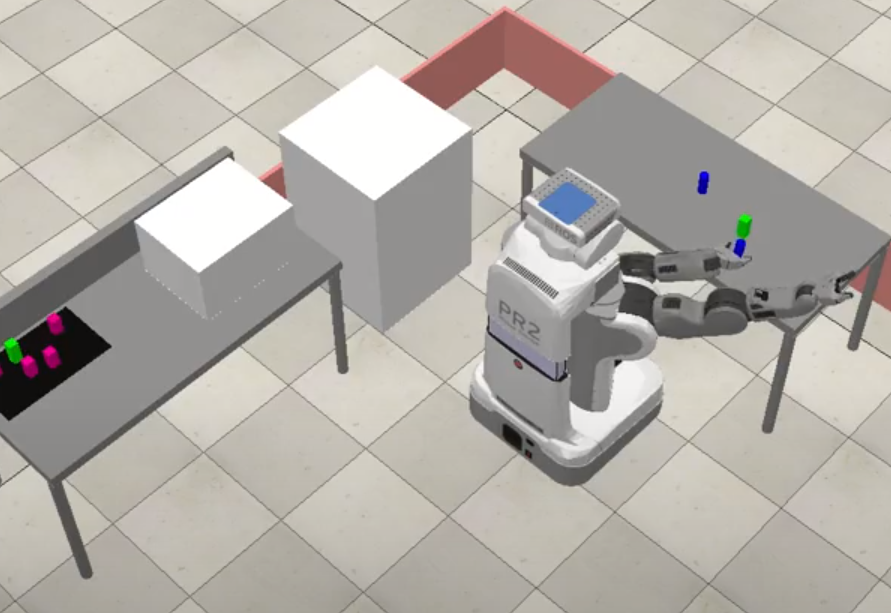}\label{fig:kit12}}\\
\caption{Part of the meal preparation in the kitchen environment. Refer to Section~\ref{subsub:kitchen} for a detailed explanation of the illustrations.}
\label{fig:kitchen}
\end{figure}

\subsubsection{Preparing a Meal in a Kitchen}
\label{subsub:kitchen}
The goal of this set of experiments is to evaluate the ability of TMP-IDAN to address a number of challenges in typical TAMP scenarios, namely:
the presence of unfeasible actions in a third domain, which is different from the ones related to the cluttered workspace and the Tower of Hanoi;
the need to deal with \textit{non-monotonic plans}, for example when an object must be picked-and-placed more than once;
the presence of \textit{non-geometric actions}, that is, actions which effects are only limited to the symbolic domain, without affecting the underlying configuration space.

In this case, we have set-up a scenario taking inspiration from a kitchen environment, whereas a PR2 robot (equipped with standard grippers and an RGB-D sensor) must \textit{prepare a meal} by cooking vegetables, cleaning glasses, and finally setting the table for two, as shown in Fig.~\ref{fig:kit}.
Blocks are used to mimic vegetables and glasses.
There are two types of vegetables, namely cabbages represented by green blocks, and  radishes represented by pink blocks.
Only cabbages must be cooked.
In the initial configuration, on a working table (on the left in Fig.~\ref{fig:kit}) radishes act as obstacles for picking up cabbages.
The radishes have to be moved aside to pick the cabbages, and finally they have to be placed back in their initial position. 
Glasses are represented by blue blocks.
There are two glasses, both located on a meal table (on the right in Fig.~\ref{fig:kit}).
Objects may be cleaned by inserting them in the dishwasher, that is, the big white cube in Fig.~\ref{fig:kit}. 
Cooking happens by placing vegetables in the microwave, that is, the small white cube on the working table in Fig.~\ref{fig:kit}. 
Fig.~\ref{fig:kitchen} shows various moments of the overall simulation. 
PR2 moves close to the table with the vegetables as shown in Fig.~\ref{fig:kit1}.
Then, it picks up an obstructing radish and places it aside on the table (see Fig.~\ref{fig:kit2}). 
PR2 then picks up one cabbage and places it in a predefined position, as shown in Fig.~\ref{fig:kit3}.
To keep the table tidy, the radish is placed back to its initial position (see Fig.~\ref{fig:kit4}).
Before the cabbage can be cooked, it must be washed. 
For this reason, it is placed in the dishwasher\footnote{We make two comments. The first is that the object is placed \textit{on}, and not \textit{in}, the dishwasher, which is due to the need to over-simplify the motions associated with this task. The second is that we do not enforce washing cabbages in the dishwasher as a best practice!}, as shown in Fig.~\ref{fig:kit5}.
Once the cabbage is placed in the dishwasher, PR2 waits for a while (to animate the \textit{wait} action, PR2 moves around and comes back as shown in Fig.~\ref{fig:kit6}). 
Later, Fig.~\ref{fig:kit7} shows that the cabbage is placed in (\textit{on}) the microwave for cooking.
While the cabbage is being cooked, a glass is picked up from the table on the right, and inserted into the dishwasher for cleaning as shown in Fig.~\ref{fig:kit8}-\ref{fig:kit9}.
Once the cabbage is cooked, it is picked from the microwave (see Fig.~\ref{fig:kit10}), and placed on the dinner table as shown in Fig.~\ref{fig:kit11}.
The glass is then picked from the dishwasher, and placed near the cabbage as shown in Fig.\ref{fig:kit12}.
This procedure is repeated with the remaining cabbage and glass until the table is set for two.
The simulation has been run $10$ times for statistical purposes.

\begin{figure}[]
    \centering
    \includegraphics[scale=0.6]{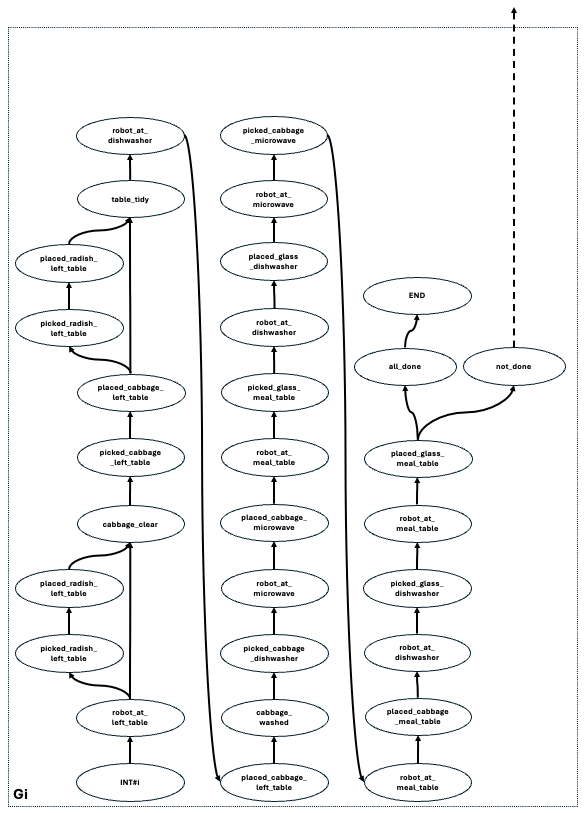}
    \caption{Part of an unfolded AND/OR graph used to model a single step of operations in the kitchen environment.}
    \label{fig:kitchen_graph}
\end{figure}
Fig.~\ref{fig:kitchen_graph} shows an unfolded AND/OR graph for the kitchen environment.
This AND/OR graph model replicates the behavior shown in Fig.~\ref{fig:kit}. 
However, it is important to note that the version depicted in the Figure is an \textit{unfolded}, simplified version of a more complex AND/OR graph.
Such an AND/OR graph would take into account, for example, the fact that the two sequences of
(i) picking up, washing, and putting down one cabbage on the meal table, and of 
(ii) picking up, washing, and putting down one glass on the meal table 
may be executed in any order, whereas in the unfolded version the first sequence is executed always before the second.   
Furthermore, as foreseen in \cite{darvish2020hierarchical}, a hierarchical AND/OR graph may be used to encode sub-sequences, such as for example the one related to picking up and putting down one radish, which in the unfolded version appears twice.
Also in this unfolded version, all PR2 actions are encoded within the hyper-arcs, which are all in \textit{OR}.
Some of the actions, for example, \textit{cook} and \textit{was} are non-geometric in nature, since they correspond to processes acting upon objects which configuration space does not change while the action unfolds. 
Since in this scenario PR2 must also move among different areas, we assume to have three \textit{agents}, namely PR2's right and left arms, as well as its mobile base.
Agents responsible for manipulation are grounded to manipulation actions on the basis of geometrical considerations, that is, which is the best arm to pick up a given vegetable or glass is computed at run-time before the action is executed, whereas all navigation actions are assigned to the PR2's mobile base.
 \begin{table}[t!]
\centering
\scalebox{0.7}{
\begin{tabular}{l c c c}
\hline
\textit{Computation time}   & \textit{Average} [s]  & \textit{Standard Deviation} [s] \\ 
\hline
\hline
AND/OR Graph                & 7.78                  & 0.47 \\
Graph Net Search            & 0.47                  & 0.01 \\
Motion Planner (right arm)  & 317.89                & 9.52 \\
Motion Planner (left arm)   & 121.76                & 20.95 \\
Motion Planner (base)       & 56.86                 & 0.02 \\
\hline
\end{tabular}}
\caption{Computation times for selected modules of TMP-IDAN, and their corresponding standard deviations, for the kitchen environment.}
\label{tab:kitchen}
\end{table}
Computational times of a selection of modules of \textsc{TMP-IDAN} are given in Table~\ref{tab:kitchen}.
Again, it is manifest how task planning is computationally negligible if compared with motion planning. 
This is due, like in the two previous scenarios, to the fact that the motion planners must explore a planning space more complex than the corresponding task space, and that multiple motion strategies must be explored to obtain a feasible motion.
This can be appreciated by comparing the AND/OR graph depth \textit{d}, which on average is $d=13.5$ with a standard deviation of $4.95$, and the number of planning attempts carried out to obtain feasible motions for the right arm, the left arm, and the mobile base.
The \textit{Motion Planner} module generates an average of $284.06$ attempts for the right arm, with a standard deviation of $29.07$, $84.11$ attempts for the left arm, with a standard deviation of $12.76$, and $57.83$ attempts for the mobile base, with a standard deviation of $1.62$.
In all the cases, the average number of attempts is significantly higher than the average depth of the AND/OR graph.
It is also possible to appreciate, as one would expect, that the planning of navigation motions is comparatively lightweight with respect to the case of manipulation motions.
\subsection{Multi-robot \textsc{TMP-IDAN}}
The goal of this class of experiments is to characterize the computational performance of multi-robot \textsc{TMP-IDAN}, and in particular with respect to parameters related to a \textit{large task space}, the presence of \textit{unfeasible actions}, and the employed task allocation strategy.

We have set-up a multi-robot cluttered table-top scenario, similar to one for the cluttered workspace domain, as shown in Fig.~\ref{fig:toy2}. 
In this case, all objects are in the form of cylinders which could be grasped only by their side.
The goal is to reach a specific target object, while re-arranging all objects which presence hinders the possibility of reaching and grasping the target object.
Experiments are performed using two Franka Emika Panda, referred to as $R1$ and $R2$, arranged in a squared configuration, and provided with standard grippers as well as with RGB-D cameras for detecting objects in their workspace.
Re-arranged objects are moved outside the workspace in a \textit{safe} space. 
The number of cylindrical objects varies from $6$ to $64$
For any given number of objects, the simulation is conducted $3$ times, and in each experiment the target objects are chosen randomly.
The employed AND/OR graph is not substantially different from the one reported in Fig.~\ref{fig:combined} on the right hand side, and therefore it is not reported here. 

\begin{table}[t!]
\centering
\scalebox{0.7}{
\begin{tabular}{c c c c c c} 
\hline
\textit{Objects}     & Avg. \textit{d}   & \textit{TP} [s]   & \textit{MP} [s]   & \textit{MP attempts}  & Objects to re-arrange \\
\hline
\hline
6           & 2.33              & 1.55             & 16.33            & 36.66                 & 1.66 \\
8           & 2.66              & 1.42             & 13.72            & 32.66                 & 1.66 \\
9           & 2.33              & 1.35             & 16.90            & 30.66                 & 1.00 \\
12          & 4.00              & 2.40             & 20.33            & 42.66                 & 2.66 \\
16          & 4.66              & 2.75             & 20.17            & 47.66                 & 3.33 \\
20          & 3.66              & 2.12             & 17.61            & 42.33                 & 2.66 \\
30          & 5.75              & 3.46             & 25.85            & 56.75                 & 4.25 \\
49          & 17.00             & 8.64             & 81.36            & 147.00                & 6.33 \\
64          & 22.6              & 9.05             & 132.17           & 203.00                & 12.2 \\
\hline
\end{tabular}}
\caption{Trend of the average network depth \textit{d}, the average total task planning time \textit{TP}, the average total motion planning time \textit{MP}, the number of average motion planning attempts and the average number of objects to be re-arranged, as the number of objects varies from $6$ to $64$ in the multi-robot cluttered workspace for $R1$.}
\label{tab:table1}
\end{table}

\begin{table}[t!]
\centering
\scalebox{0.7}{
\begin{tabular}{c c c c c c} 
\hline
\textit{Objects}     & Avg. \textit{d}   & \textit{TP} [s]   & \textit{MP} [s]   & \textit{MP attempts}  & Objects to re-arrange \\
\hline
\hline
6           & 2.33              & 1.12              & 13.20             & 31.33                 & 1.00 \\
8           & 2.00              & 1.14              & 18.15             & 28.66                 & 1.00 \\
9           & 3.66              & 2.41              & 18.52             & 40.00                 & 2.33 \\
12          & 3.33              & 2.05              & 17.82             & 37.66                 & 2.33 \\
16          & 4.33              & 2.63              & 22.83             & 45.66                 & 3.33 \\
20          & 6.33              & 4.02              & 25.18             & 61.33                 & 5.33 \\
30          & 9.25              & 4.74              & 45.22             & 83.75                 & 5.75 \\
49          & 12.33             & 6.88              & 49.59             & 106.66                & 10.66 \\
64          & 12.60             & 5.27              & 66.99             & 109.87                & 7.00 \\
\hline
\end{tabular}}
\caption{Trend of the average network depth \textit{d}, the average total task planning time \textit{TP}, the average total motion planning time \textit{MP}, the number of average motion planning attempts and the average number of objects to be re-arranged, as the number of objects varies from $6$ to $64$ in the multi-robot cluttered workspace for $R2$.}
\label{tab:table2}
\end{table}

Table~\ref{tab:table1} and Table~\ref{tab:table2} report the average network depth $d$, the average total task planning time \textit{TP}, the average total motion planning time \textit{MP}, the average number of motion planning attempts, and the average number of objects to be re-arranged for robots $R1$ and $R2$, respectively. 
It can be observed that, as the AND/OR graph network depth $d$ increases, task planning times are \textit{almost} linear with respect to $d$, the reason being the fact that, for an AND/OR graph network with each graph consisting of $n$ nodes, the time complexity is $O(nd)$. 
In contrast, PDDL-based planners are characterized by a search complexity of $O(n \log n)$, where $n \approx 2^{13}$ for this table-top scenario. 

\begin{table}[t!]
\centering
\scalebox{0.7}{
\begin{tabular}{lcc} 
\hline
\textit{Computation time}   & \textit{Average} [s]      & \textit{Standard Deviation} [s] \\ 
\hline
\hline
AND/OR Graph                & 0.03213                   & 0.02174 \\ 
Graph Net Search            & 0.8992                    & 0.1862 \\
Motion Planner              & 0.8010                    & 0.2170 \\
Motion Executor             & 4.7490                    & 2.0240 \\
\hline
\end{tabular}}
\caption{Computation times for selected modules of \textsc{TMP-IDAN}, and their corresponding standard deviations, for the multi-robot case in a cluttered workspace with $6$ objects.}
\label{tab:compana}
\end{table}
Table~\ref{tab:compana} shows the average combined planning and execution times for robots $R1$ and $R2$, focusing on a selection of the modules in the architecture.
In this case, the average running time of the \textit{AND/OR Graph} and \textit{Graph Net Search} modules is comparable to the one of the \textit{Motion Planner}, despite the need to run many motion planning attempts (an average of $18.32$ with a standard deviation of $6.45$)
Both the discrete and continuous planning processes are fast when compared with the running times of \textit{Motion Executor}.

\begin{figure}[H]
\centering
\subfloat[]{\includegraphics[width=7cm]{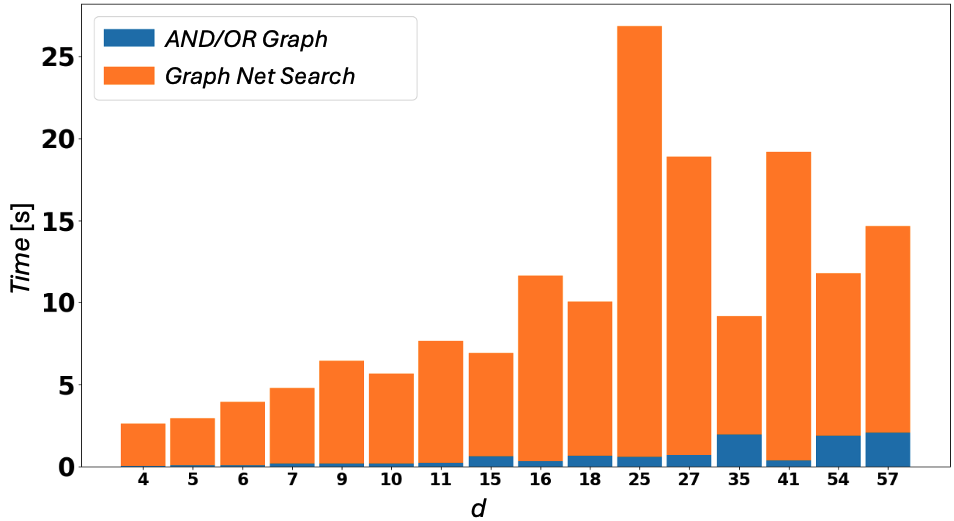}\label{fig:compa1}}\\
\subfloat[]{\includegraphics[width=7cm]{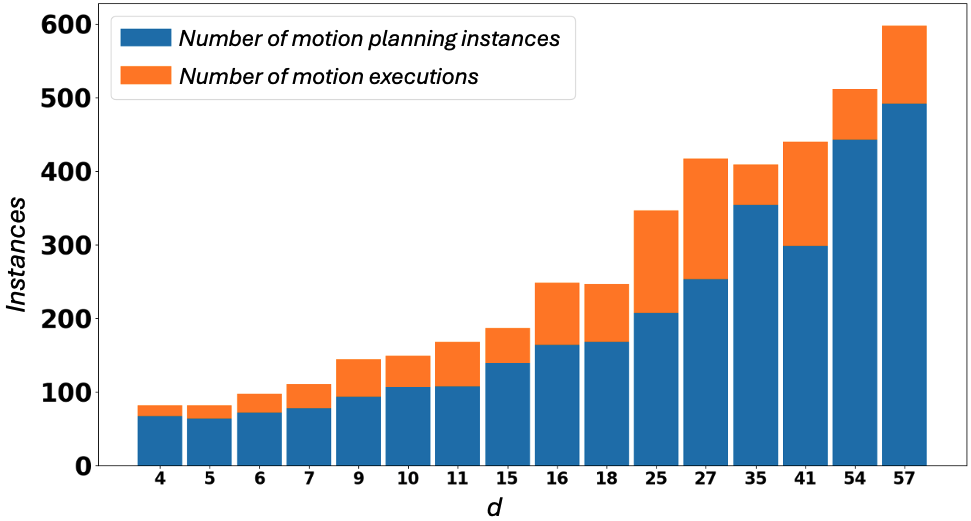}\label{fig:compb1}}\\
\subfloat[]{\includegraphics[width=7cm]{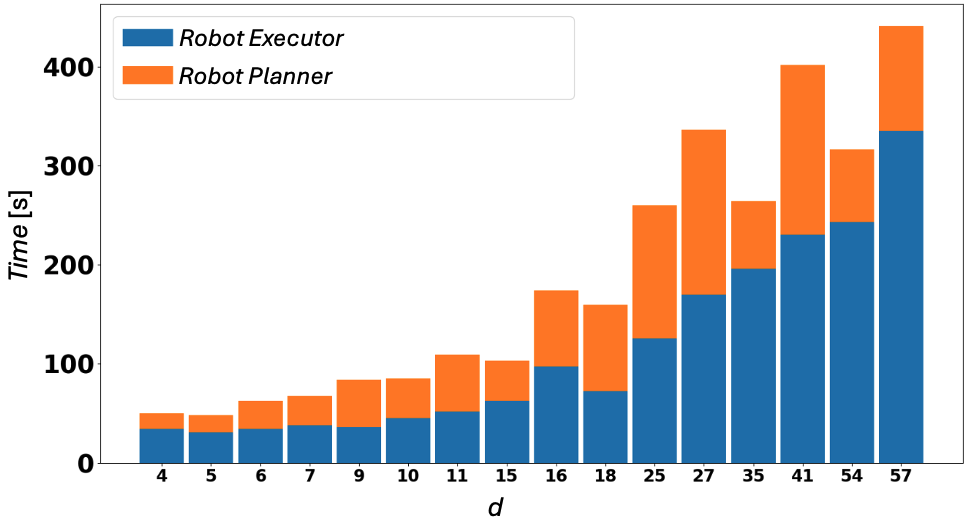}\label{fig:compc1}}
\caption{Various trends associated with an increasing number of depth \textit{d} for the multi-robot case in a cluttered workspace: 
(a) the computational time of \textit{Graph Net Search} dominates the one of \textit{AND/OR Graph};
(b) the number of motion planning instances is higher than actual executions, since \textit{Motion Planner} tries various motion strategies before them being executed;
(c) the increase in \textit{d} has a small effect on the overall planning and execution time, as execution takes significantly more time over planning.}
\label{fig:comp1}
\end{figure}
Fig.~\ref{fig:comp1} shows different plot for an increasing network depth $d$. 
In particular, Fig.~\ref{fig:compa1} shows the total task planning time with $d$. 
It can be observed that the linearity hypothesis in planning time is not strictly followed since, for example, the time for $d=25$ is greater than the time for $d=41$. 
In a number of simulations, due to motion planning failures, a new graph is expanded before reaching its node, which implies that more nodes are traversed for $d=25$ compared to $d=41$, therefore explaining the variations. 
Fig.~\ref{fig:compb1} plots the number of motion planning attempts and the total motion executions with an increasing $d$. 
An increase in $d$ in most cases correspond to a higher number of obstructing objects. 
Therefore, as the depth $d$ increases, there is a growth in the number of motion planning attempts.
However, motion planning failures can also increase the depth $d$ since a new graph needs to be expanded. 
This explains the slight deviation in the trend. 
Fig.~\ref{fig:compc1} reports the total motion planning and execution times with an increasing $d$, and we can notice how the plot follows from the discussion above. 

\section{Discussion}
\label{sec:discussion}
As we have discussed in the previous Section, \textsc{TMP-IDAN} is capable of solving various challenging TAMP problems, both in the single-robot and the multi-robot case.
However, the current version of TMP-IDAN suffers from a few limitations. 
In this Section, we briefly highlight them, with the aim of pointing towards interesting directions for future research.

In the real-world experimental scenario, we did not employ a precise object detection system. The positions of the objects were known beforehand, and a perception system was used to distinguish between normal objects and the goal object (marked with red tape in Fig.~\ref{fig:combined}). 

\textit{Possible non-monotonicity of the task-motion planning process}.
We have seen that, for example in the \textit{Cluttered workspace} scenario, it may happen that objects hindering a pick-and-place task are removed from the workspace.
In a sense, as soon as they are removed from the workspace, they cannot be considered anymore at the task planning level.
However, situations may occur when removing objects from the workspace may not be possible, and the motion planner should reason upon where to store re-arranged objects.
In this case, the object is still part of the domain, and can be re-considered in future tasks.
In our case, this happens in the \textit{Kitchen} scenario, where re-arranging vegetables implies placing them on the same working table for later use.
To temporarily store re-arranged objects, the motion planner could, in principle, randomly select free space for the placements. 
However, this choice would likely affect the utility (and therefore the overall plan quality) since a future robot trajectory could collide again with the re-arranged object, and this would trigger a new re-arrangement.
Obviously enough, this is a specific instance of the more general case on non-monotonicity, whereby the outcome of an action may impact other downstream actions in the whole plan \cite{capitanelli2018RAS}.
The case of non-monotone actions is a well-studied issue in symbolic planning, and it is epitomized in the famous Sussman anomaly.
However, in this case, it originates from the interplay between the task-level and the motion-level planning, and therefore it is not caused by issues in the pure symbolic representation.
Current work seeks to develop a non-myopic technique to predict possible future trajectories based on previous motions and the current environment state, so as to enable the selection of object placements minimizing non-monotonicity. 

\textit{A fully observable, deterministic, and static workspace}.
Currently, TMP-IDAN assumes a fully observable, deterministic environment with static obstacles. 
In order to relax this assumption, different research directions could be pursued.
Being able to handle partially observable scenarios could be achieved by enabling the motion planner to use planning under uncertainty techniques.
While this is certainly possible, and related approaches have been discussed in the literature, their adoption would be only partially useful.
In fact, it would be necessary also to adopt probabilistic decision making techniques, that is, to modify the AND/OR graph network layer \cite{darvish2018Mechatronics, darvish2020hierarchical, karami2020task, karami2020ROMAN}
While AND/OR graphs allow for a weak notion of uncertainty by modeling possible alternative pathways to reach a goal state, the structure of the graph is still fixed.
A promising direction would be to integrate AND/OR graph networks with full Bayesian state estimation techniques, embracing such approaches as Active Inference \cite{Fristonetal2017} or Decision Field Theory \cite{BusemeyerandTownsend1993}.
Similarly, the incorporation of dynamic obstacles, including humans moving within the robot's workspace, could be very important to address non-structured scenarios related, for example, to human-robot collaboration.
This would require to update the \textit{Knowledge Base} module frequently, as driven by continuous information from the \textit{Scene Perception} module, which should be extended to encompass the prediction of object and human (navigation and manipulation) trajectories.
Recent research work on digital twin approaches for human-robot collaboration \cite{Shaabanetal2023}, as well human motion modeling, classification and prediction \cite{Carfietal2018, CarfiandMastrogiovanni2023} seem to go in this promising direction.

Both of these topics are intriguing directions for future research, and current work is undergoing to address their challenges. 
\section{Conclusion}
\label{sec:conclusion}
In this paper, we present TMP-IDAN, a novel approach to task and motion planning, which is based on the hypothesis that it is possible to reduce a wide range of TAMP problems to a recursive formulation, which is expanded while the execution of the task and motion plan unfolds.
On the one hand, the task-level representation is based on an AND/OR graph, a formalism that reduces a task planning search space to a small, countable set of engineered alternatives, by constraining the trajectories of possible solutions in the symbolic state space.
Whenever a complex problem can be divided into a sequential execution of a (not known in advance) number of similar steps, it is possible to grow a network of AND/OR graphs thereby interleaving task planning and execution.
On the other hand, motion planning is implemented using a variant of RRT.
For each task-level action, different motion planning attempts are made via an in-the-loop simulation, which maximizes the likelihood of successful robot motions.

We discuss different aspects of TMP-IDAN.
First, we characterize its behavior on the basis of a number of aspects, such as unfeasible actions due to failures at the motion planning or execution level, large task spaces, the interplay between tasks and motions, non-monotone plans, and actions characterized solely by symbolic effects, that is, without consequences at the motion level.
Second, we introduce two variants of TMP-IDAN, namely single- and multi-robot.
Third, we characterize the overall computational complexity of TMP-IDAN as almost linear, and we compare it with the \textit{de facto} standard in planning based on the PDDL language.
Fourth, we validate those aspects in a number of well-known scenarios in TAMP, such as retrieving an object from a cluttered workspace, solving the Tower of Hanoi game, and preparing a meal in a kitchen environment. 

Among possible promising research direction, we identify the need to deal with non-monotone plans in the integrated TAMP search spaces, and the extension to partially observable and dynamic environments.

\section*{Acknowledgments}
This research is partially supported by the Italian government under the National Recovery and Resilience Plan (NRRP), Mission 4, Component 2, Investment 1.5, funded by the European Union NextGenerationEU programme, and awarded by the Italian Ministry of University and Research, project RAISE, grant agreement no. J33C22001220001.
 
\bibliographystyle{elsarticle-num}
\bibliography{References}


\end{document}